\newtheorem{theorem}{Theorem}
\newtheorem{definition}{Definition}
\newtheorem{con}{Convention}
\numberwithin{equation}{section}
\author{\small{\textbf{Andreas Hula\footnote{Wellcome Trust Centre for Neuroimaging, London, United Kingdom }, P.Read Montague\footnote{Wellcome Trust Centre for Neuroimaging , London, United Kingdom - Human Neuroimaging Laboratory, Virginia Tech Carilion Research Institute, Roanoke, VA, United States -  Department of Physics, Virginia Polytechnic Institute and State University, Blacksburg, VA, United States  }, Peter Dayan\footnote{Gatsby Computational Unit, University College London, London, United Kingdom}}}\\ \footnote{Address all correspondence to Andreas Hula, \href{mailto:a.hula@ucl.ac.uk}{a.hula@ucl.ac.uk}}}
\title{
\textbf{Monte Carlo Planning method estimates planning horizons during interactive social exchange}}
\date{\today}
\begin{document}
\maketitle
\begin{abstract}
  Reciprocating interactions represent a central feature of all human
  exchanges. They have been the target of various recent experiments,
  with healthy participants and psychiatric populations engaging as dyads in
  multi-round exchanges such as a repeated trust task. Behaviour in such
  exchanges involves complexities related to each agent's preference
  for equity with their partner, beliefs about the partner's appetite
  for equity, beliefs about the partner's model of their partner, and so
  on. Agents may also plan different numbers of steps into the future.
  Providing a computationally precise account of the behaviour is an
  essential step towards understanding what underlies choices. A
  natural framework for this is that of an interactive partially
  observable Markov decision process  (IPOMDP). 
  However, the various
  complexities make IPOMDPs inordinately computationally challenging.
  Here, we show how to approximate the solution for the multi-round
  trust task using a variant of the Monte-Carlo tree search algorithm.
  We demonstrate that the algorithm is efficient and effective, and therefore can be
  used to invert observations of behavioural choices. We use generated
  behaviour to elucidate the richness and sophistication of interactive
  inference.
\end{abstract}

\begin{section}{Author Summary}

Agents interacting in games with multiple rounds must model
their partners' thought processes over extended time horizons. This poses a 
substantial computational challenge that has  restricted previous behavioural analyses.
By taking advantage of
recent advances in algorithms for planning in the face of uncertainty,
we demonstrate how these formal methods can be extended. We use a
well studied social exchange game called the trust task to illustrate the
power of our method, showing how agents with particular cognitive
and social characteristics can be expected to interact, and how to
infer the properties of individuals from observing their behaviour.
\end{section}

\begin{section}{Introduction}

Successful social interactions require individuals to understand the 
consequences of their actions on the future actions and beliefs of those around
them. To map these processes is a complex challenge in at least three different ways. The first is that
other peoples' preferences or
  utilities are not known exactly. Even if
the various components of the  utility functions are held in common, 
the actual values of the
  parameters of partners, e.g., their degrees of envy or guilt
\cite{Fehr, FehrGaechter, FehrAlt, FehrFisch, Camerer03, McCabeAlt}, could well differ. 
 This ignorance 
decreases through experience, and
  can be modeled using the framework of a partially observable Markov
  decision process  (POMDP). However, normal mechanisms for
  learning in POMDPs involve probing or running experiments, which has
  the potential cost of partners fooling each other.
The second complexity is represented by characterizing the form of the model agents have of others.
  In principle, agent A's model of agent B should include agent B's
  model of agent A; and in turn, agent B's model of agent A's
  model of agent B, and so forth.  The beautiful
theory of Nash equilibria \cite{Nash}, extended to the case of
incomplete information via so-called Bayes-Nash equilibria \cite{Harsanyi}
dispenses with this so-called cognitive hierarchy \cite{Costa, Camerer, Yoshida, Sanfey},
looking instead for an equilibrium solution. However, a wealth of work (see for instance 
\cite{Centipede}) has shown that people
deviate from Nash behaviour. It has been proposed
people instead model others to a
strictly limited, yet non-negligible, degree \cite{Costa, Camerer}.

The final complexity arises when we consider that although it is common
in experimental economics to create one-shot interactions, many of the
most interesting and richest aspects of  behaviour arise with multiple
rounds of interactions. Here, for concreteness, we consider the multi round
trust task, which is a social exchange game that has been used with
hundreds of pairs (dyads) of subjects, including both normal and
clinical populations \cite{Brooks2008,Misha2010,Chiu, Kishida2010, FehrCamerer}. This game has
been used to show that characteristics that only arise in multi-round
interactions such as defection (agent
  A increases their cooperation between two rounds; agent B responds by
  decreasing theirs) have
observable neural consequences that can be measured using functional
magnetic resonance imaging (fMRI) \cite{Brooks2005, Brooks2008, Ting2012, Lee, McCabe}.

The interactive POMDP (IPOMDP) \cite{IPOMDP} is a theoretical framework
that formalizes many of these complexities. It characterizes the
uncertainties about the utility functions and planning over multiple
rounds in terms of a POMDP, and constructs an explicit cognitive
hierarchy of models about the other (hence the moniker
'interactive'). This framework has previously been used with data from
the multi-round trust task \cite{Debbs2008, Ting2012}. However, solving
IPOMDPs is computationally extremely challenging, restricting those previous investigations 
to a rather minuscule degree of forward planning
(just two- out of what is actually a ten-round interaction).
Our main contribution is the adaptation of an efficient Monte Carlo tree search method, 
called partially observable Monte Carlo planning (POMCP) to IPOMDP problems. 
 Our second contribution is to
illustrate  this algorithm through examination of the multiround trust task. We
show characteristic patterns of  behaviour to be expected for subjects
with particular degrees of inequality aversion, other-modeling and planning
capacities, and consider how to invert observed  behaviour to make
inferences about the nature of subjects' reasoning capacities.
\end{section}

\begin{section}{Materials and Methods}

We first briefly review Markov decision processes (MDPs), their partially observable extensions (POMDPs), and the POMCP algorithm 
  invented to solve them approximately, but efficiently. These concern
  single agents. We then discuss IPOMDPs and the application of POMCP to 
  solving them when there are multiple agents. Finally, we describe the multi-round trust task.
\subsection{Partially Observable Markov Decision Processes}
\begin{center}
\includegraphics[width =4.4 in, height =3in]{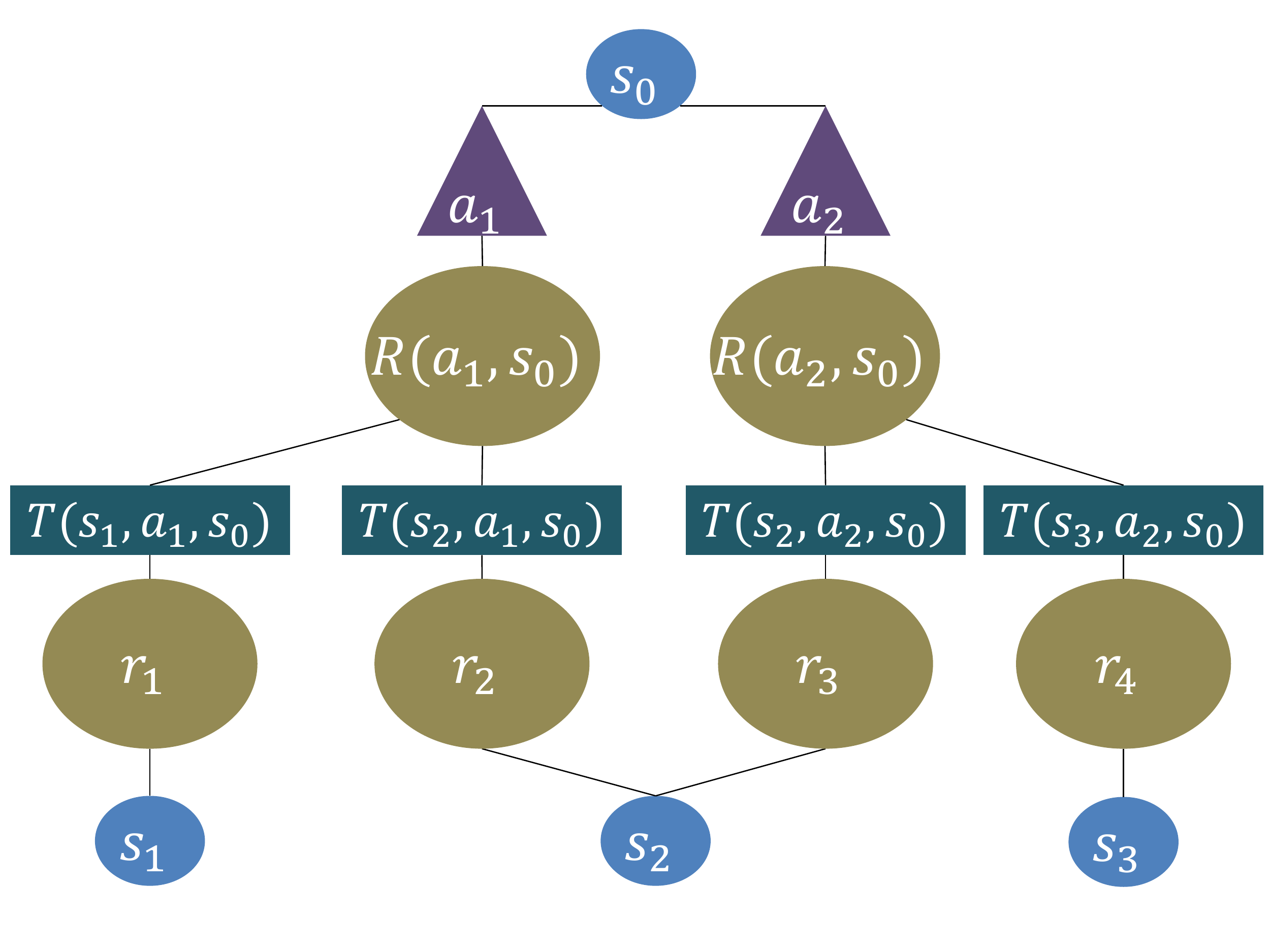}
\captionof{figure}{\small{A Markov decision process. The agent starts at state $s_0$ 
and has two possible actions $a_1$ and $a_2$. Exercising either, it can transition into three possible states, one of which 
($s_2$) can be reached through either action. Each state and action combination holds a particular reward expectation $R(a,s)$. Based on this information, 
the agent can choose an action and transitions with probability $\mathcal T (s,a,s_0)$ to a new state $s$, obtaining an actual reward $r$ in the process. The procedure is then 
repeated from the new state, with its' given action possibilities or else the decision process might end, depending on the given process.} }
\label{fig:MDPim}
\end{center}
A  Markov decision process (MDP) \cite{Puterman}
is defined by sets $\mathcal S$ of "states" and $\mathcal A$ of
"actions", and several components that evaluate and link the two, including transition
probabilities $\mathcal T$, and information $\mathcal R$ about possible
rewards.  States describe the position of the agent in the environment, and
determine which actions can be taken, accounting for, at least probabilistically,
the consequences for rewards and future states. Transitions between states 
are described by means of a collection of
transition probabilities $\mathcal T$, assigning to each possible state
$s\in\mathcal S$ and each possible action $a\in\mathcal A$ from that
state, a transition
probability distribution or measure $\mathcal T_{s\hat{s}}^a = \mathcal
T (\hat{s}, a,s):=\mathbb P[ \hat{s} | s, a]$ which 
encodes the likelihood of ending in state $\hat{s}$ after
taking action $a$ from state $s$.  
 The Markov property requires that the
transition (and reward probabilities) only depend on the current state
(and action), and are independent from the past events.
An illustration of these concepts can be found in figure \ref{fig:MDPim}.
\begin{center}
\includegraphics[width =4.4in, height = 3in]{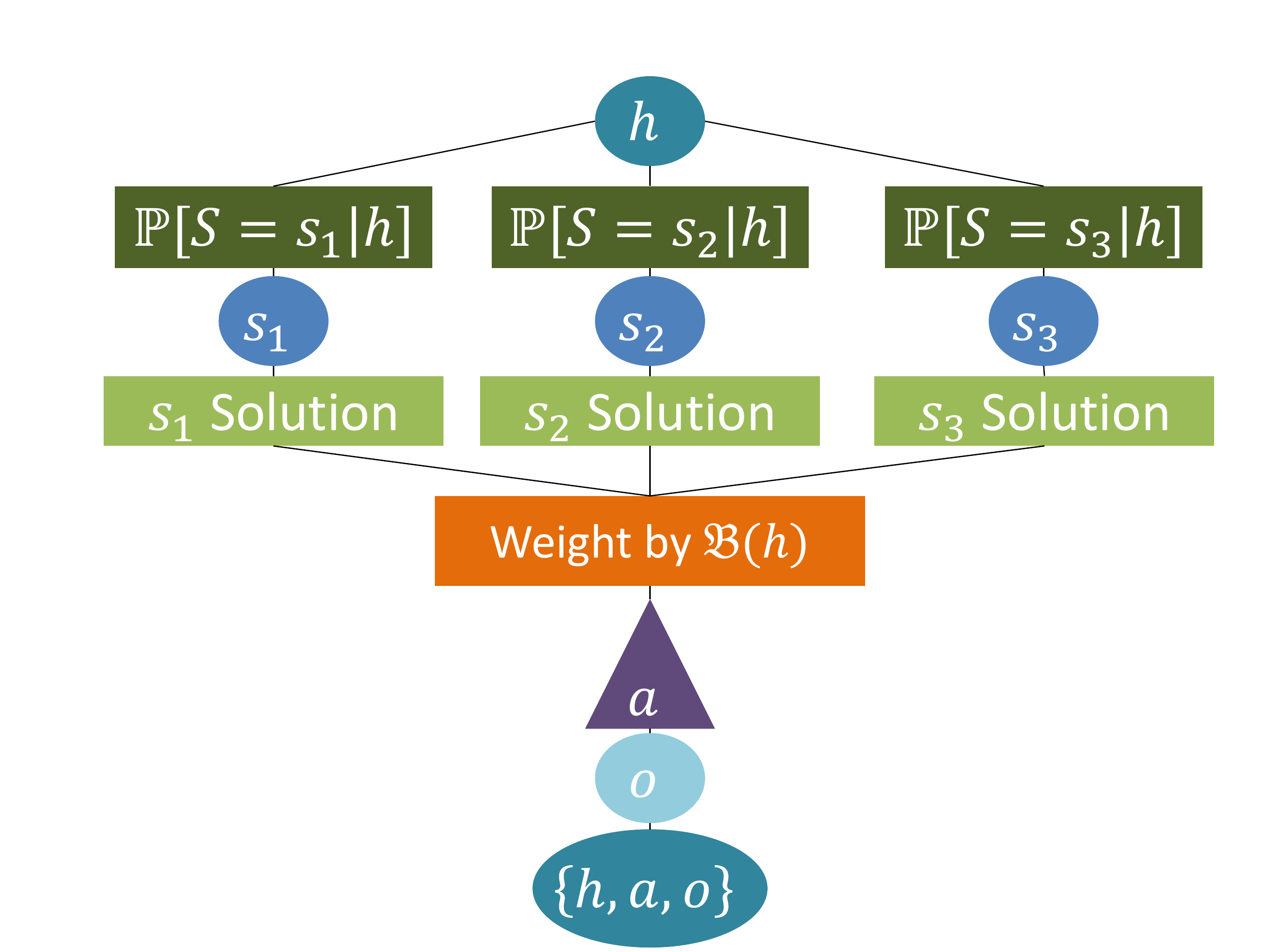}
\captionof{figure}{\small{A partially observable Markov decision process. Starting from a 
observed interaction history $h$, the agents use their belief state $\mathcal B (h)$, to determine how 
likely they find themselves in one of three possible actual states $s_1, s_2, s_3$. Any solution to the POMDP requires calculating all future action values depending on all possible future 
histories and the respective Belief states. 
Following this, an observation $o$ is obtained by the agent and the new history $\{h, a, o\}$ becomes the starting point for the next decision.
  }}
\label{fig:POMDPim}
\end{center}
 By contrast, in a partially observable MDP (i.e., a POMDP \cite{Kaelbling}), the agent
can also be uncertain about its state $s$. Instead, there is a set of
observations $o\in\mathcal O$ that incompletely pin down
states, depending on the observation probabilities 
$
\mathcal W_{\hat{s}o}^a = \mathcal W(o,a,\hat{s}) :=\mathbb P [o| \hat{s}, a].
$
 These report the
probability of observing $o$ when action $a$ has occasioned a transition
to state $\hat{s}$. See figure $\ref{fig:POMDPim}$ for an illustration of the concept.

We use the notation $s_t=s$, $a_t=a$ or $o_t=o$ to refer explicitly to
the outcome state, action or observation at a given time. 
 The {\it
  history\/} $h \in \mathcal H$ is the sequence of actions and
observations, wherein each action from the point of view of the agent
moves the time index ahead by $1$, $h_t := \{ o_0, a_0, o_1, a_1, \ldots,
a_{t-1}, o_t \}$.  Here $o_0$ may be trivial (deterministic or empty).
The agent can perform Bayesian inference to turn its history at time $t$
into a distribution $\mathbb P [ S_t=s_t | h_t]$ over its state at time
$t$, where $S_t$ denotes the random variable encoding the uncertainty about the 
current state at time $t$. 
This distribution is called its belief state $\mathcal B(h_t)$, with
$\mathbb P_{\mathcal B(h_t)}[S_t = s_t] := \mathbb P [ S_t = s_t | h_t].$ Inference depends
on knowing $\mathcal{T, W}$ and the distribution over the initial state
$S_0$, which we write as $\mathcal B(h_0)$.
Information about rewards $\mathcal R$ comprises a collection of utility
functions $r \in \mathcal R,r: \mathcal A\times \mathcal S \times \mathcal O \rightarrow
\mathbb R$, a discount function $\Gamma \in \mathcal R, \Gamma:
\mathbb N \rightarrow [0,1]$ \footnote{A more general definiton would be
 $\Gamma \in \mathcal R, \Gamma:\mathcal H \times \mathcal H \rightarrow [0,1]$, allowing it to be conditional on the precise present and 
future histories.} and a survival function 
$H \in \mathcal R, H: \mathbb N \times \mathbb N \rightarrow [0,1]$. The utility functions
determine the immediate gain associated with executing action $a$ at
state $s$ and observing $o$ (sometimes writing $r_t$ for the reward following the
$t^{\text{th}}$ action). From the utilities, we define the reward function 
$R: \mathcal A \times \mathcal S\rightarrow \mathbb R$, as the 
expected gain for 
taking action $a$ at state $s$ as $R(a,s) = \mathbb E[r(a,s,o)]$, where this 
expectation is taken over all possible observations $o$.
 Since we usually operate on histories, rather than fixed states, we define 
the expected reward from a given history $h$ as
$ R (a, h) := \sum_{s\in\mathcal S}  R( a, s)\mathbb P[ s | 
h].$ 
The discount function
weights the present impact of a future return, 
depending only on the separation between present and future. We use exponential discounting with 
a fixed number  $\gamma \in [0,1]$ to define our discount function:
\begin{equation}
\Gamma ({\tau}- t) =\gamma^{\tau - t}. \quad \forall\tau,  t \in \mathbb N, \tau \geq t.
\end{equation}

Additionally, we define $H$ such that 
$H(\tau, t)$ is $0$ for $\tau > K$ and $1$ otherwise. $K$ in general is a random stopping time. We call 
the second component $t$ the reference time of the survival function.

The survival function allows us to encode the planning horizon of an agent during decision making: 
If $H(\tau, t)$ is $0$ for $\tau-t > P$, we say that the local planning horizon at
$t$ is less than or equal to $P$.

The policy $\pi\in\Pi,
\pi (a,h): = \mathbb P[a| h]$
is defined as a mapping of histories to probabilities over possible
actions. Here $\Pi$ is called the set of admissible policies. For
convenience, we sometimes write the distribution function as
$\pi(h)$.
The value function of a fixed policy $\pi$
starting from present history $h_t$ is
\begin{equation}
V^{\pi}(h_t) :=  \sum_{\tau=t}^{\infty}  \gamma^{\tau - t} H(\tau, t) \mathbb E
[r_{\tau}|   \pi ,  h_{\tau} ]
\end{equation}
i.e., a sum of the discounted future expected rewards (note
  that $h_{\tau}$ is 
 a random variable here, not a fixed value). 
Equally, the state-action value is
\begin{equation}
Q^{\pi}(a, h_t) :=  R (a,h_t)+\sum_{\tau=t+1}^{\infty}  \gamma^{\tau-t}H(\tau,t)
\mathbb E [r_{\tau}|   \pi , h_{\tau}].
\end{equation}
\begin{definition}[Formal Definition - POMDP]
Using the notation of this section, a POMDP is defined as a tuple
$(\mathcal S, \mathcal A, \mathcal O, \mathcal T, \mathcal W, \mathcal
R, \Pi, \mathcal B_0)$  of components as outlined above.
\end{definition}

\begin{con}[Softmax Decision Making]\label{quant}
 A wealth of experimental work (for instance \cite{Norm, 2step, nogo}) has found 
that the choices of humans (and other animals) can be well described by softmax 
policies based on the agents' state-action values, to
 encompass the stochasticity of observed behaviour in real subject data. 
See \cite{quantal}, for a behavioural economics perspective 
and \cite{Yoshida} for a neuroscience perspective.
In view of using our model primarily for experimental analysis, we will base our discussion on 
the decision making rule: 
\begin{equation}\label{eq:softmax}
\pi(a,h)=\mathbb P  [ a | h] =  \frac {e^{ \beta Q^{\pi} (a , h)}}
{\sum_{b\in\mathcal A} e^{\beta Q^{\pi}(b , h)} } 
\end{equation}
where $\beta>0$ is called the inverse temperature parameter
  and controls how diffuse are the probabilities. The policy
\begin{equation}\label{eq:argmax}
\pi(a, h) = \left\{
\begin{array}{ll}
1 & \textrm{ if }  Q^{\pi} (a , h) = \max{  \{   Q^{\pi} (b , h) | b \in
  \mathcal A \}} \quad \textrm{(assuming this is unique)} \\
0 & \textrm{otherwise}
\end{array} \right.
\end{equation}
can be obtained as a limiting case for $\beta \rightarrow \infty$.
\end{con}
\begin{con}
From now on, we shall denote by $Q(a,h)$, the state-action value  $Q^{\pi}(a,h)$ with respect to the softmax policy.
\end{con}
\subsection{POMCP}\label{POMCP}

POMCP was introduced by  \cite{POMCP2010} as an efficient approximation
scheme for solving POMDPs. Here, for completeness, we describe the
algorithm; later, we adapt it to the case of an IPOMDP.

POMCP is a generative model-based sampling method for calculating history-action
values. That is, it builds a limited portion of the tree of future
histories starting from the current $h_t$, using a sample-based search
algorithm (called upper confidence bounds for trees (UCT); \cite{Szespari}) which provides guarantees as to
how far from optimal the resulting action can be, given a certain
number of samples (based on results in \cite{Auer1} and \cite{Auer2}). Algorithm ~\ref{POMCPFig} provides pseudo code for
the adapted POMCP algorithm. The procedure is presented 
schematically in figure $\ref{fig:MCTSim}$.

\begin{center}
\includegraphics[width =4.4 in, height = 2.2in]{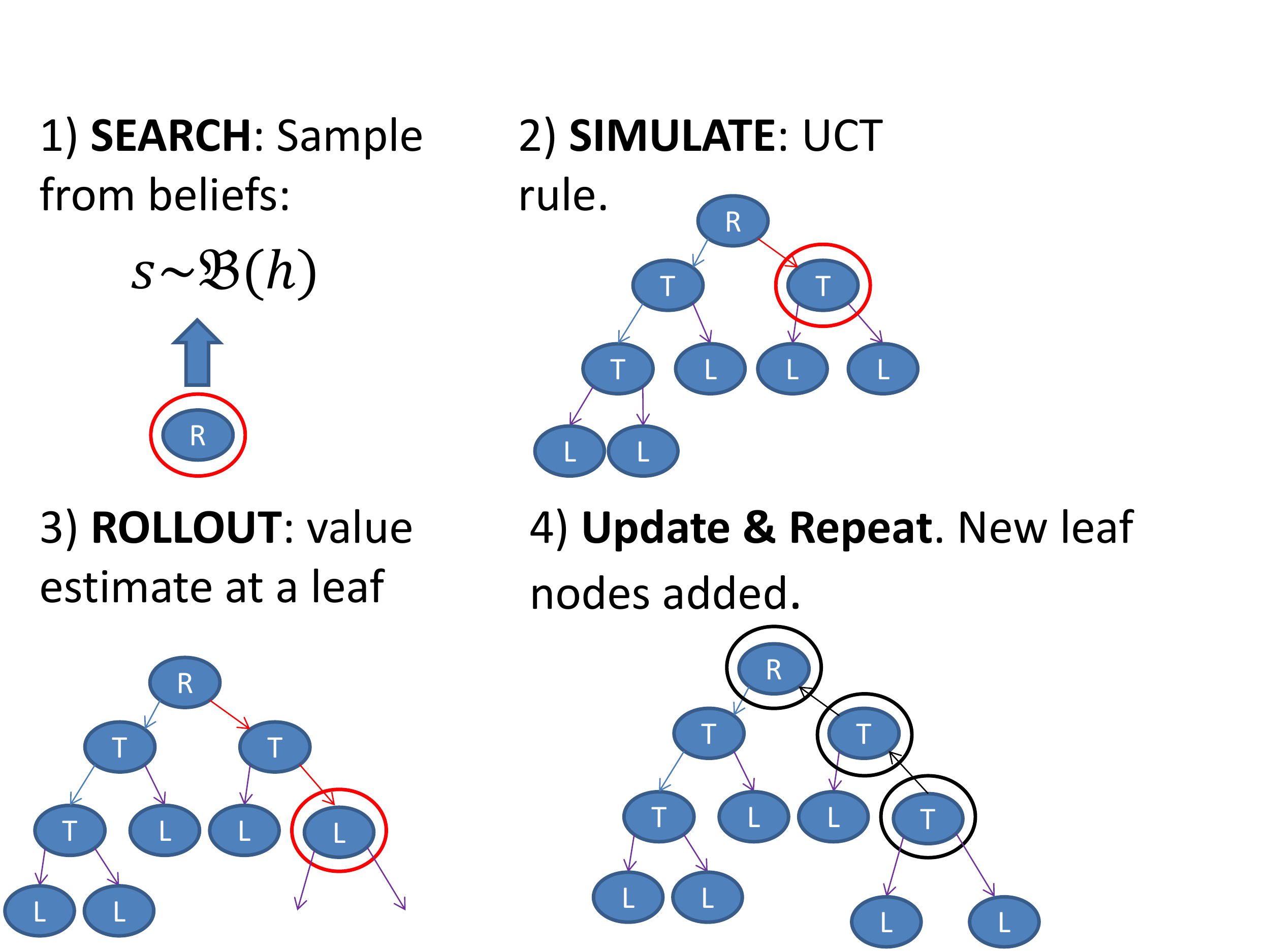}
\captionof{figure}{\small{Image of POMCP. The algorithm samples a state $s$ from the Belief state $\mathcal B (h)$ at the root R 
( $R$ representing the current history $h$), keeps this state $s$ fixed till step $4)$,
 follows UCT in already visited domains (labelled tree nodes $T$) and performs a rollout and back update when hitting a leaf (labelled $L$). 
Then step $1) - 4)$ is repeated until the specified number of simulations has been reached.}} 
\label{fig:MCTSim}
\end{center}

\begin{algorithm}
\caption{Partially Observable Monte Carlo Planning}\label{POMCPFig}
\begin{subalgorithm}{.5\textwidth}
\begin{algorithmic}

\Procedure{SEARCH}{h , t , n}
	\For {SIMULATIONS $= 1,\ldots, n$}  
		\State $k \gets t$
		\If{ $h_t= o_0$}
			\State $s \sim \mathcal B_0$ 
		\Else
			\State $s \sim \mathcal B(h_t)$
		\EndIf \\
		\hspace{0.96 cm}	SIMULATE $(s,h, t, k)$
	\EndFor\\
\hspace{0.5 cm}\Return $a \sim$ {\sc SoftUCT}
\EndProcedure
\Procedure{ROLLOUT}{$s$,$h$,$t$,$k$}
	\If{$ H (k, t) \leq0$}\\
		\hspace{1 cm}\Return 0
	\EndIf
		\State $a \sim \pi_{\textrm{rollout}}(h,\cdot)$
		\State $(s',o, r) \sim \mathcal G (s,a)$
		\State $h \gets \{h,a,o\}$ 
		\State $k \gets k +1$\\
 		\hspace{0.3 cm}\Return $r$+$\gamma $ROLLOUT$(s',h ,t, k)$
\EndProcedure
\end{algorithmic}
\end{subalgorithm}%
\begin{subalgorithm}{.5\textwidth}
\begin{algorithmic}
\Procedure{SIMULATE }{s,h ,t , k}
	\If{$ H (k , t) \leq 0$}\\
\hspace{1 cm}		\Return 0
	\EndIf
	\If{$h \not\in$ T}
		\ForAll {$a\in\mathcal A$}
			\State $T(ha)\gets (N_{\textrm{init}}(h,a), \tilde{Q}_{\textrm{init}}(a, h), \emptyset)$
		\EndFor\\
		\hspace{1 cm} \Return ROLLOUT $(s,h,t, k)$
	\EndIf \\
	\State $a \sim$ {\sc SoftUCT} 
	\State  $(s',o, r) \sim \mathcal G (s,a)$
	\State $h \gets \{h,a,o\}$
	\State $k \gets k+1$
 	\State $R\gets r$+$ \gamma $SIMULATE$(s',h,t,k)$
	\State $N(h) \gets N(h)+1$
	\State $N(h, a)\gets N(h,a)+1$
	\State $\tilde{Q}(a, h) \gets \tilde{Q}(a, h) + \frac{R -\tilde{Q}(a, h)}{N(h, a)}$\\
	\hspace{0.5 cm}\Return R
\EndProcedure
\end{algorithmic}
\end{subalgorithm}
\end{algorithm}

The algorithm is based on a tree structure $T$, wherein nodes $T(h) = (N(h), \tilde{Q}(h), \mathcal B (h))$ represent 
possible future histories explored by the algorithm, and are characterized by the number $N(h)$ of times history
 $h$ was visited in the simulation, the estimated value $\tilde{Q}(h)$ for visiting $h$ and 
the approximate belief state $\mathcal B (h)$ at $h$.
Each new node in $T$ is initialized with 
initial action exploration counts $N_{\textrm{init}}(h,a)=0$ for all 
possible actions $a$ from $h$ and 
an initial action value estimate $\tilde{Q}_{\textrm{init}}(h,a)=0$ for 
all possible actions $a$ from $h$
 and 
an empty belief state $\mathcal B (h) = \emptyset$.

The value $N(h)$ is then calculated from all actions counts 
from the node $N(h) = \sum_{a \in\mathcal A} N(h,a)$. 
$\tilde{Q}(h)$ denotes the mean of obtained values, for simulations 
starting from node $h$. $\mathcal B (h)$ can either be calculated analytically, 
if it is computationally feasible to apply Bayes theorem, or be approximated by the 
so called {\it root sampling} procedure (see below).

In terms of the algorithm, the generative model $\mathcal G (s,a)$ of the POMDP determines
$(s', o, r)\sim \mathcal G (s , a)$, the simulated reward,
observation and subsequent state for taking $a$ at $s$; $s$ itself is sampled from
the current history $h$.  Then, every (future) history
of actions and observations $h$ defines a node $T(h)$ 
in the tree structure $T$, which is characterized
by the available actions and their average
simulated action values $\tilde{Q}(a,h)$ under the policy {\sc SoftUCT}
at future states. 

If the node has been visited for the $N(h)^{\text{th}}$ time; with
action $a$ being taken for the $N(h,a)^{\text{th}}$ time, then the
average simulated value is updated (starting from $0$) using
sampled simulated rewards $R$  up to terminal time $K$,
when the current simulation/tree traversal ends as:
\begin{equation}
\tilde{Q}^{\textrm{new}}(a,h) =
\tilde{Q}^{\textrm{old}}(a, h) + \frac{1}{N(h ,a)}\left(
 R - \tilde{Q}^{\textrm{old}} (a, h)\right).
\end{equation}

The search algorithm has two decision rules, depending on whether a
traversed node has already been visited or is a leaf of the search tree.
In the former case, a decision is reached using {\sc SoftUCT} by
defining
\begin{equation}
\textrm{\sc SoftUCT} \qquad Q(a,h): = \tilde{Q}(a,h) + c\sqrt{\frac{\log
N(h)}{N(h,a)}} \qquad 
\mathbb P [a| h] = \frac{ e^{\beta(Q(a, h))}}{\sum_{b} e^{\beta (Q(b, h))} }.
\label{softUCT}
\end{equation}
where $c$ is a parameter that favors exploration (analogous to an
equivalent parameter in UCT).

If the node is new, a so-called "rollout" policy is used to provide a
crude estimate of the value of the leaf. This policy can be either very simple 
(uniform or $\epsilon -$greedy based on a very simple model) or specifically adjusted to the search space, 
in order to optimize performance. 

The rollout value estimate together with the {\sc SoftUCT} exploration rule 
is the core mechanism for efficient tree exploration. In this work, we only use an 
$\epsilon-$greedy mechanism, as is described in the section on the multi round 
trust game. 

Another innovation in POMCP that underlies its dramatically
superior performance is called {\it root sampling.} 
This procedure allows to form the belief state at later states, as long 
as the initial belief state $\mathcal B_0$ is known. This means that,
although it is necessary to perform inference to draw samples from the
belief state at the root of the search tree, one can then use each sample as
if it was (temporarily) true, without performing inference at states that are deeper
in the search tree to work out the new transition probabilities that
pertain to the new belief states associated with the histories at those
points. The reason for this is that the probabilities of getting to the
nodes in the search tree represent exactly what is necessary to compensate
for the apparent inferential infelicity \cite{POMCP2010}-- i.e., the search tree performs as a
probabilistic filter. The technical details of the root sampling procedure can be found in 
\cite{POMCP2010}.

In the presence of analytically tractable updating rules (or at least analytically tractable approximations) the belief state at a new node can instead be calculated 
by Bayes' theorem. In the case for the multi round trust game below, we follow the approximating updating rule in ~\cite{Ting2012}. 

\subsection{Interactive Partially Observable Markov Decision Processes}

An Interactive Partially Observable Markov Decision Process (IPOMDP) is
a multi agent setting in which the actions of each agent may observably
affect the distribution of expected rewards for the other agents.

Since IPOMDPs may be less familiar than POMDPs, we provide more detail
about them; consult \cite{IPOMDP} for a complete reference
formulation  and \cite{Wunder} for an excellent discussion and 
extension.

We define the IPOMDP such that the decision making process of each agent becomes a standard (albeit large) POMDP, allowing the direct application of POMDP methods to IPOMDP problems.

\begin{center}
\includegraphics[width =4.4 in, height = 3in]{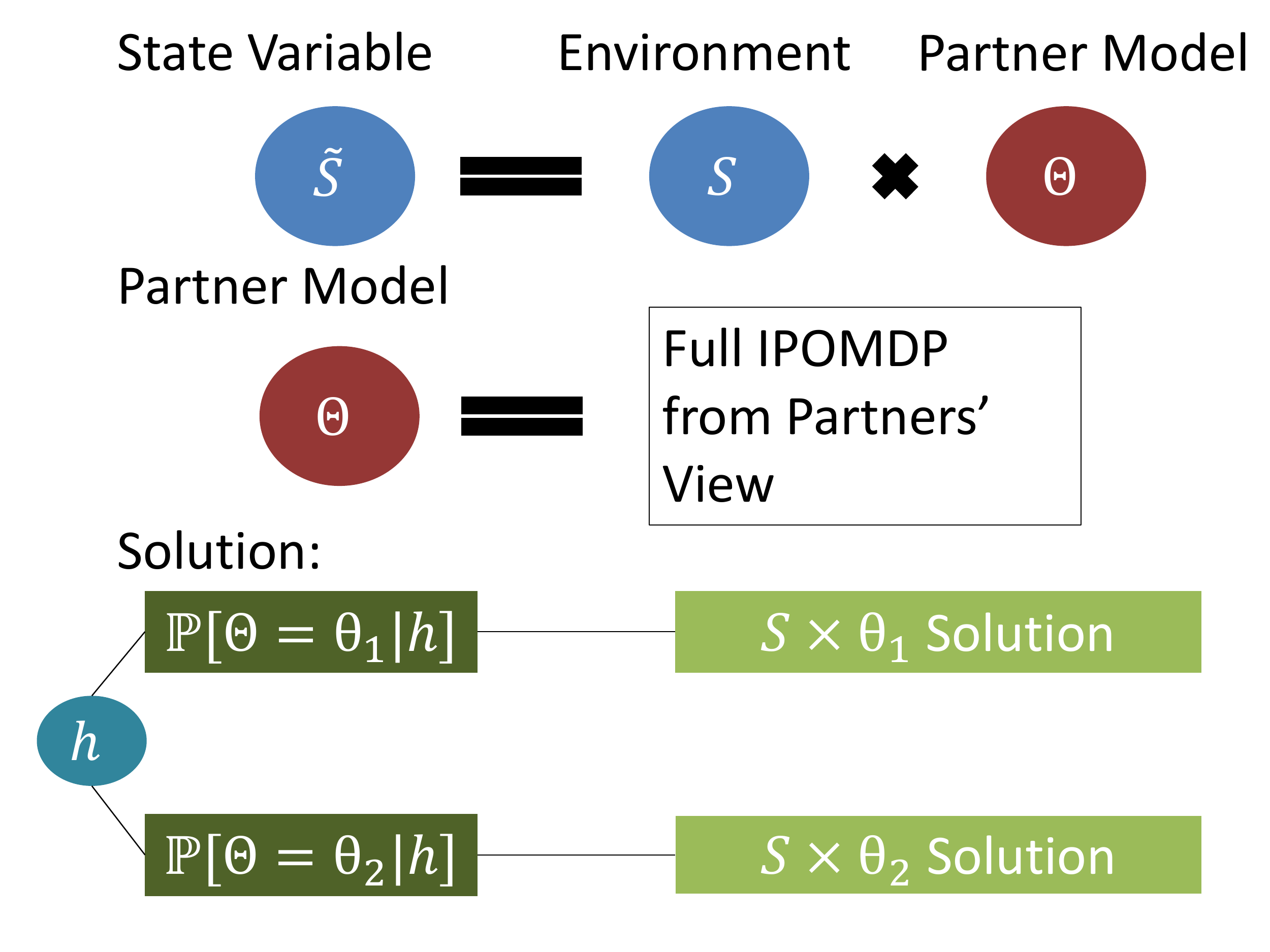}
\captionof{figure}{\small{Interactive Partially Observable Markov Decision Process.  Compared to a POMDP, the process is further 
complicated by the necessity to keep different models $\Theta$ of the other agents' intentions, so that evidence on the 
correct intentional model may be accrued in the belief state $\mathcal B(h)$. Any solution to the IPOMDP requires calculating all future action values depending on all possible future 
histories and the respective Belief states.}}
\label{fig:IPOMDPim}
\end{center}
\begin{definition}[Formal Definition - IPOMDP]
An IPOMDP is a collection of POMDPs such that the following holds:

Agents are indexed by the finite set $\mathcal I$.  Each agent $i
\in\mathcal I$ is described by a single POMDP ($\mathcal S^i$, $\mathcal
A^i$, $\mathcal O^i$, $\mathcal T^i$, $\mathcal W^i$, $\mathcal R^i$ ,$\Pi^i$,
$\mathcal B_0^i )$ denoting its actual decision making process. We first
define the physical state space $\mathcal S^i_{\text{\rm phys}}$: an
element $s \in S^i_{\text{\rm phys}}$  is a complete setting of all
features of the environment that determine the action possibilities
$\mathcal A^i$  and obtainable rewards $\mathcal R^i$ of $i$ for the present and all possible following
histories, from the point of view of $i$. The physical state space
$\mathcal S_{\text{\rm phys}}^i$ is augmented by the set $\mathcal D^i$
of models of the partner agents $\theta^{ij} \in \mathcal D^{i}, j \in \mathcal I
\setminus\{ i\}$, called intentional models, which are themselves POMDPs
$\theta^{ij}$=$(\mathcal S^{ij}$, $\mathcal A^{ij}$, $\mathcal O^{ij}$,
$\mathcal T^{ij}$, $\mathcal W^{ij}$, $\mathcal R^{ij}$, $\Pi^{ij}$, $\mathcal
B_0^{ij})$. These describe how agent $i$ believes agent $j$ perceives the world and reaches its
decisions.
 The  possible state space of agent $i$ can be written $\mathcal S^i = \mathcal{S}^i_{\text{\rm phys}}\times
\mathcal D^i$ and a given state can be written $\tilde{s}^i= (s^{i},
\times_j \theta^{ij}) $, where $s^i\in\mathcal S^i_{\text{\rm phys}}$ is
the physical state of the environment and $\theta^{ij}$ are the models
of the other agents. Note that the intentional models $\theta^{ij}$ contain 
themselves state spaces that encode the history of the game as observed by 
agent $j$ from the point of view of  agent $i$.  The elements of $\mathcal S^i$ are called interactive states.
Agents themselves act according to the softmax function of
history-action values, and assume that their interactive partner agents do the
same.
The elements of the definition are summarized in figure \ref{fig:IPOMDPim}.
\end{definition}
\begin{con}
We denote by capital $S$ and capital $\tilde{S}$ the random variables, that encode uncertainty about the 
physical state and the interactive state respectively.
\end{con}

When choosing the set of intentional models, we consider agents and
their partners to engage in a cognitive hierarchy of successive
mentalization steps \cite{Camerer, Costa}, depicted in figure $\ref{fig:cToM}$. The simplest agent can try to infer
what kind of partner it faces (level $0$ thinking). The next simplest
agent could additionally try to infer what the partner might be thinking
of it (level $1$). Next, the agent might try to understand their
partner's inferences about the agent's thinking about the partner (level
$2$). Generally, this would enable a potentially unbounded chain of
mentalization steps. It is a tenet of cognitive hierarchy theory
\cite{Camerer} that the hierarchy terminates  finitely and for 
many tasks 
after only very few steps (e.g., Poisson, with a mean of around $1.5$)  .

\begin{center}
\includegraphics[width =4.2 in, height = 2.2in]{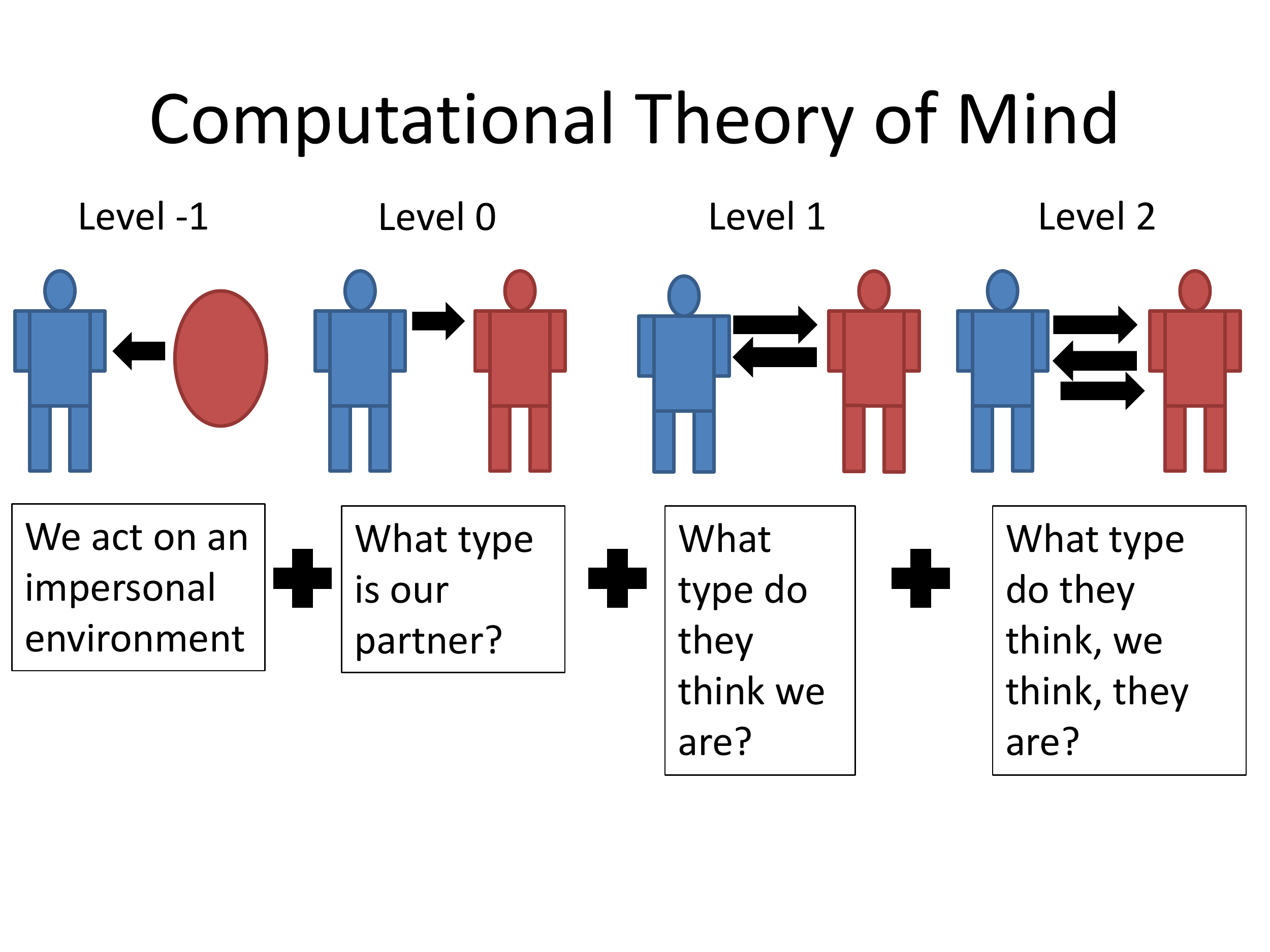}
\captionof{figure}{\small{Computational Theory of Mind (ToM) formalizes the notion of our understanding of other peoples 
thought processes.}} 
\label{fig:cToM}
\end{center}
We formalize this notion as follows.
\begin{definition}[A Hierarchy Of Intentional Models]
Since models of the partner agent may contain interactive states in which it in turn models
 the agent $i$, we can specify a hierarchical intentional structure $\mathcal D^{i,l}$,
 built from what we call the level $l
\geq -1$ intentional models $\mathcal D^{i, l}$. $\mathcal D^{i, l}$ is
defined inductively from
\[
\theta^{ij, -1}  \in \mathcal D^{i,-1} \Leftrightarrow S^{ij, -1} =  \mathcal S^{ij}_{\text{\rm phys}} \times \{ \emptyset \}.
\]
This means that any level $-1$ intentional model reacts
strictly to the environment, without holding any further intentional
models. 
The higher levels are obtained as
\[
\theta^{ij, l}  \in \mathcal D^{i, l} \Leftrightarrow \mathcal S^{ij,l} = \mathcal S^{ij}_{\text{\rm phys}}\times \mathcal D^{ij, l-1}. 
\]
Here $ \mathcal D^{ij, l-1}$ denotes the $l-1$ intentional models, that agent $i$ thinks agent $j$ might hold of the other players. 
These level $l-1$ intentional models arise by the same procedure applied to the level $-1$ models that agent $i$ thinks agent $j$ might hold.
\end{definition}

\begin{definition}[Theory of Mind (ToM) Level]\label{ToMConvention}
We follow a similar assumption as the so called $k$-level thinking (see \cite{Costa}),
 in that we assume that each agent operates at a particular level $l_i$ (called
the agent's theory of mind (ToM) level; and which it is assumed to know),
and models all partners as being at level $l_j = l_{i}-1$. 
\end{definition}

We chose definition \ref{ToMConvention} for comparability with earlier
work \cite{Debbs2008, Ting2012}.

\begin{con}
It is necessary to be able to calculate the belief state in every POMDP that is encountered. 
An agent updates its belief state in a Bayesian manner,  following an action $a_t^i$ and an observation $o_{t+1}^i$. 
This leads to a sequential update rule operating over the belief state $ \mathbb P [ \tilde{S}^i_t | h_t^i ]$ 
of a given agent $i$ at a given time $t$: 
\begin{equation}\label{Update}
\mathbb P [ \tilde{S}^i_{t+1} =\tilde{s}_1 | \{h_t^i , a_t^i, o_{t+1}^i \} ] =\eta\mathcal W (o^i_{t+1}, a^i_t,\tilde{s}_1)  
\sum_{\tilde{s} \in \mathcal S^i} \mathcal T (\tilde{s}_1 ,a_t^i,\tilde{s} ) \mathbb P [ \tilde{S}^i_t=\tilde{s}| h_t^i ].
\end{equation}
Here $\eta$ is a normalization constant associated with the joint distribution of transition and 
observation probability, conditional on $\tilde{s}$, $\tilde{s}_1, o_{t+1}^i$  and $a^i_t$.
 The observation $o^i_{t+1}$ in particular incorporates any results of the actions of the other agents, before the next action 
of the given agent. 

We note that the above rule applies recursively to every intentional model 
in the nested structure $\mathcal D^i$, as every POMDP has a separate belief state.
\end{con}
This is slightly different from \cite{IPOMDP} so that the above update is conventional for a POMDP.
\begin{con}[Expected Utility Maximisation]
The decision making rule in our IPOMDP treatment is based 
on expected utility as
encoded in the reward function. 
The explicit formula for the action
value $Q(a^i_t , h^i_t)$ under a softmax policy (equation~\ref{eq:softmax}) is:
\begin{equation}\label{Bellman}
Q(a_t^i , h_t^i) =  R( a_t^i , h_t^i) + \sum_{o_{t+1}^i \in \mathcal O} \mathbb P [o_{t+1}^i | \{ h_t^i, a_t^i \}] \sum_{b\in\mathcal A^i} \gamma^{(i)} H (t+1,t) Q( b , h^i_{t+1}| t)\mathbb P [b| h_{t+1}^i].
\end{equation}
Here $h_{t+1} =  \{ h^i_t , a_t^i, o_{t+1}^i\}$ and $Q( b , h^i_{t+1}| t)$ denotes the 
action value at $t+1$ with the survival function conditioned to reference time $t$. $\gamma^{(i)}$ is the 
discount factor of agent $i$, rather than the $i$-th power. 
This defines a recursive Bellman equation, with the value of taking action $a^i_t$
given history $h^i_t$ being the expected immediate reward  $ R( a_t^i, h_t^i)$ plus the
expected value of future actions conditional on $a_t^i$ and its possible
consequences $o_{t+1}^i$ discounted by  $\gamma^i $.

The belief state $\mathcal B(h^i_t)$ allows us to link $h^i_t$ to a
distribution of interactive states and use $\mathcal W$ to calculate
$\mathbb P[o^i_{t+1} | h^i_t, a^i_t]$, in particular including the
reactions of other agents to the actions of one agent.
We call the resulting policy the "solution" to the IPOMDP. 
\end{con} 
\subsection{Equilibria and IPOMDPs}
Our central interest is in the use of the IPOMDP to capture the
interaction amongst human agents with limited cognitive resources and
time for their exchanges. It has been noted in \cite{Camerer} that 
the distribution of subject levels favours rather low values
(e.g., Poisson, with a mean of around $1.5$). 
In the opposite limit, sufficient conditions are known in which taking
the cognitive hierarchy out to infinity for all involved agents allows for 
at least one Bayes-Nash equilibrium solution (part II, theorem II, p. $322$ of
Harsanyi \cite{Harsanyi}) and sufficient conditions have been shown in 
\cite{Nyarko}, given which a solution to the infinite hierarchy model 
can be approximated by the sequence of finite 
hierarchy model solutions.
 A discussion of a different condition
can be found in \cite{IPOMDPEquilibrium}; however, this condition
does assume a infinite time horizon in the interaction.
In general, as \cite{Camerer}, p.$868$ notes, it is not 
true that the infinite hierarchy solution will be a Nash equilibrium.
For the purposes of computational psychiatry, we find the very mismatches and limitations, 
that prevent subjects' strategies to evolve to a (Bayes)-Nash equilibrium in the given 
time frame, to be of particular interest.
 Therefore we restrict our attention to quantal response equilibrium like 
behaviours (\cite{quantal}) , based on potentially inconsistent initial beliefs 
by the involved agents with ultimately very limited cognitive resources and 
finite time exchanges.
\subsection{Applying POMCP to an IPOMDP}
An IPOMDP is a collection of POMDPs, so POMCP is, in principle, applicable
to each encountered POMDP.

 However, unlike
the examples in \cite{POMCP2010}, an IPOMDP contains
the intentional model POMDPs $\theta^{ij}$ as part of the state space,
and these themselves contain a rich structure of beliefs. So, the state
is sampled from the belief state at the root for agent $i$ is an $I$
tuple $(\hat{s}^i, \hat{\theta}^{i1},\ldots ,\hat{\theta}^{i(|\mathcal
I|-1)})$ of a physical state $\hat{s}^i$ and $(|\mathcal I|-1)$ POMDPs,
one for each partner.  
(This is also akin to the random instantiation of players in \cite{Harsanyi}).
 Since the $\hat{\theta}^{ij}$ still contain belief
states in their own right, it is still necessary to do some explicit
inference during the creation of each tree. Indeed, explicit inference
is hard to avoid altogether during simulation, as the interactive states
require the partner to be able to learn \cite{IPOMDP}.
Nevertheless, a number of performance improvements that we detail below
still allow us to apply the POMCP method involving substantial planning
horizons.

\subsection{Simplifications for dyadic repeated exchange}
Many social paradigms based upon game theory, including the iterated
ultimatum game, prisoners' dilemma, iterated "rock, paper, scissors"
(for $2$ agents) and the multi round trust game, involve repeated dyads.
In these, each interaction involves the same structure of physical states and
actions $(\mathcal S_{\textrm{phys}},\mathcal A)$ (see below), and 
all discount functions are $0$ past a finite horizon.
\begin{definition}[Dyadic Repeated Exchange without state uncertainty]\label{cert} 
  Consider a two agent IPOMDP framework in which there is no physical
  state uncertainty: both agents fully observe each others' actions and
  there is no uncertainty about environmental influence; and in which
  agents vary their play only based on intentional models and an agent does not believe 
that the partner can be made to transition between different intentional models by the agent's actions. Additionally, the
  framework is assumed to reset after each  exchange (i.e., after
  both agents have acted once).

 Formally this means: There is a fixed setting 
 $(\mathcal S_{\textrm{phys}},\mathcal A , \mathcal T)$,
such that physical states, actions from these states, transitions in the physical state
 and hence also obtainable rewards, differ only by a changing time 
index and there is no observational uncertainty and an agent does not believe 
that the partner can be made to transition between different intentional models by the agent's actions. Then after each exchange the framework is assumed to reset 
to the same distribution of physical
initial states $S$ within this setting (i.e. the game begins anew). 
\end{definition}  
Games of this sort admit an immediate simplification:
\begin{theorem}[Level $0$ Recombining Tree]\label{recomb}
In the situation of definition \ref{cert}, level $0$ action values at any given time only depend on the total set of 
actions and observations so far and not the order in which those exchanges were observed.
\end{theorem}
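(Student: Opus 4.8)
The plan is to reduce the interactive state of a level $0$ agent to the pair (physical state, partner type), to show that its belief over the type is an order-independent Bayesian filter, and then to propagate this order-independence through the Bellman recursion by induction on the planning horizon. First I would use Definition~\ref{cert} together with the hierarchy definitions to pin down the structure. A level $0$ agent models its partner as a level $-1$ intentional model, whose state space is $\mathcal S^{ij}_{\text{phys}}\times\{\emptyset\}$; such a partner ``reacts strictly to the environment'' and holds no further models. Since Definition~\ref{cert} removes all physical-state uncertainty and forbids the agent from believing it can move the partner between intentional models, the only live uncertainty in $\mathcal B(h^i_t)$ is the identity of the partner's fixed type $\theta^{ij,-1}$, and the interactive state factorises as $\tilde s^i=(s^i,\theta^{ij,-1})$ with $s^i$ drawn anew from the same reset distribution each exchange. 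Because the partner is level $-1$ and the physical setting is identical in every exchange, the probability it assigns to a given response is a fixed per-exchange likelihood $\ell(o,a;\theta)$ that does not depend on anything before the current exchange.

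Next I would establish order-independence of the belief, which I expect to be the crux. Specialising the update rule~(\ref{Update}) to this setting --- $\mathcal W$ deterministic, $\mathcal T$ acting trivially on the type component, and $s^i$ resetting --- I would show that the posterior over types factorises as $\mathbb P[\theta\mid h^i_t]\propto \mathbb P_{\mathcal B_0}[\theta]\prod_{r}\ell(o_r,a_r;\theta)$, a product over the exchanges recorded in $h^i_t$. Since multiplication is commutative, this product, and hence the posterior, depends only on the multiset $\{(a_r,o_r)\}$ of action--observation pairs and not on the order in which they occurred. This step relies essentially on the level $-1$ assumption (the partner does not condition on history, so the likelihoods are conditionally independent across exchanges given $\theta$) and on the reset assumption (each factor is the same function $\ell$), and I would take care to justify the conditional independence that licenses the factorisation, since without it the product structure --- and therefore commutativity --- could fail.

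Finally I would propagate this to the action values by induction on the remaining planning horizon determined by $H$. In the base case the value at the last surviving step is just the expected immediate reward $R(a,h)=\sum_{s} R(a,s)\,\mathbb P[s\mid h]$, which depends on $h$ only through the reset (hence order-free) physical state and the order-independent posterior over types. For the inductive step I would use the Bellman recursion~(\ref{Bellman}): the predictive $\mathbb P[o^i_{t+1}\mid h^i_t,a^i_t]=\sum_\theta \ell(o^i_{t+1},a^i_t;\theta)\,\mathbb P[\theta\mid h^i_t]$ again depends only on the multiset and the chosen action, while the continuation values $Q(b,h^i_{t+1})$ fall under the inductive hypothesis. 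The key bookkeeping observation is that appending $(a^i_t,o^i_{t+1})$ sends histories with equal multisets to histories with equal multisets, so two histories recording the same total set of actions and observations yield identical values at every horizon, which is exactly the claim.
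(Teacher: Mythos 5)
Your proposal is correct and rests on the same key observation as the paper's own proof: because the level $-1$ partner reacts only to the physical state, which is reset every round, the per-round transition and observation likelihoods are identical across rounds, so the cumulative Bayesian update of equation~(\ref{Update}) is a commutative product over exchanges and the posterior depends only on the multiset of action--observation pairs. The only difference is one of completeness rather than of approach: the paper stops once the belief state is shown to be order-independent, leaving implicit the transfer to action values, whereas your induction on the planning horizon through the Bellman recursion~(\ref{Bellman}) makes that last step explicit --- a reasonable strengthening, not a different route.
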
 
\begin{proof}
The level $-1$ partner model only acts on the physical state it encounters 
and the physical state space variable $S$ is 
reset at the beginning of each round in the situation of $\ref{cert}$.
Therefore, given a state $s$ in the current round and 
an action $a$ by a level $0$ agent, 
the likelihood of each transition to some state $s_1$,  
$\mathcal T(s_1, a, s)$, and of making  
observation $o$, $\mathcal W (o, a, s_1)$,  is the same at every 
round from the point of view of the level $0$ agent. It follows that the 
cumulative belief update from equation \ref{Update}, from the 
initial beliefs $\mathcal B_0$ to the current beliefs, will not depend on 
the order in which the action observation pairs $(a,o)$ were observed.\qedhere
\end{proof}
This means, that depending on the size of the state space and the 
depth of planning of interest, we may analytically calculate level $0$ action values even online or use precalculated values for larger problems. 
Furthermore, because their action values will only depend on past exchanges and not on the order in which they were observed, 
their decision making tree can be reformulated as a recombining tree.

Sometimes, an additional simplification can be made:
\begin{theorem}[Trivialised Planning]\label{trivpartner}
  In the situation of definition \ref{cert}, if the two agents do not
  act simultaneously and the state transition of the second agent is
  entirely dependent on the action executed by the first agent (as in
  the multi round trust task); and additionally the intentional model of the partner 
can not be changed through the actions of the second agent, then a level $0$ second agent can gain no
  advantage from planning ahead, since their actions will not change the
  action choices of the first agent.
\end{theorem}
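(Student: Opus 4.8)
The plan is to show that, for a level $0$ second agent, the continuation term in the Bellman recursion (equation \ref{Bellman}) carries no dependence on the agent's own current action, so that the state-action value collapses to the immediate reward up to an additive constant common to all actions. Once this is established, maximising $Q(a^i_t,h^i_t)$ over actions coincides with maximising the immediate reward $R(a^i_t,h^i_t)$, and the shared constant cancels inside the softmax of equation \ref{eq:softmax}; hence no amount of forward planning can change the induced policy, which is exactly the claim.

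First I would pin down the modelling ingredient that does the work: a level $0$ agent models its partner as a level $-1$ intentional model, which by definition reacts strictly to the physical state and holds no further models ($\mathcal S^{ij,-1} = \mathcal S^{ij}_{\text{phys}} \times \{\emptyset\}$). Under Definition \ref{cert} the physical state resets to the same initial distribution after each exchange, so the first agent's action in any later round is drawn from its fixed level $-1$ reaction to a freshly reset physical state. I would then examine $\mathbb P[o^i_{t+1} \mid h^i_t, a^i_t]$, the law of the second agent's next observation (which encodes the first agent's ensuing move). Because the partner responds only to the physical state, because that state in the next round is reset and therefore independent of $a^i_t$, and because by hypothesis $a^i_t$ cannot move the partner between intentional models, this law does not depend on $a^i_t$. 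The same reasoning applies at every subsequent round, severing any link from $a^i_t$ to the distribution of future observations and rewards.

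Propagating this forward, both each weight $\mathbb P[o^i_{t+1} \mid h^i_t, a^i_t]$ and each continuation value $Q(b, h^i_{t+1}\mid t)$ in the sum contribute an expected future return that is insensitive to $a^i_t$; the action enters the value only through the immediate term $R(a^i_t, h^i_t)$. Thus $Q(a^i_t, h^i_t) = R(a^i_t, h^i_t) + C(h^i_t)$ with $C(h^i_t)$ independent of the chosen action, so $\arg\max_a Q(a, h^i_t) = \arg\max_a R(a, h^i_t)$ and the constant drops out of the softmax. The level $0$ second agent therefore loses nothing by acting myopically.

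I expect the main obstacle to be the careful bookkeeping of move ordering and of the belief update in equation \ref{Update}. One must check that, although the history $h^i_{t+1} = \{h^i_t, a^i_t, o^i_{t+1}\}$ textually contains $a^i_t$ and the belief state over the partner's level $-1$ model genuinely evolves as observations accrue, the resulting posterior over that model (and hence the value attached to $h^i_{t+1}$) is in fact a function of $h^i_t$ and $o^i_{t+1}$ alone: the likelihoods governing which level $-1$ model the partner is do not depend on $a^i_t$, precisely because the partner's generative behaviour is tied to the resetting physical state and is insulated from $a^i_t$ by the no-intentional-transition assumption. Getting this conditioning right, so that the reset and the level $-1$ reactivity together cut every channel from $a^i_t$ to future observation distributions, is the crux of the argument.
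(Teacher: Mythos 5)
Your proposal is correct and takes essentially the same route as the paper's own (much terser) proof: both rest on the observation that the level $-1$ partner model, combined with the per-round reset of the physical state and the no-intentional-transition assumption, makes the partner's action distribution --- and hence every future observation and reward distribution --- independent of the second agent's own actions. Your explicit Bellman decomposition $Q(a^i_t,h^i_t)=R(a^i_t,h^i_t)+C(h^i_t)$, the softmax cancellation, and the belief-update bookkeeping merely spell out in detail what the paper compresses into the single claim that the partner ``will not change their distribution of state transitions, depending on the agents' actions and hence also their distribution of future obtainable rewards will not change.''
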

\begin{proof}
In the scenario described in theorem \ref{trivpartner} the physical state variable $S$ of the agent $2$ is entirely dependent on the action of the other agent. 
If the agent is level $0$, they model their partner as level $-1$ and by additional assumption the second agent does not believe 
that the partner can be made to transition between different intentional models by the second agent's actions, hence their partner will not change their distribution of state transitions, 
depending on the agents' actions and hence also their distribution of future obtainable rewards will not change. 
 \qedhere
\end{proof}
\begin{theorem}[Trivialised Theory of Mind Levels]\label{trivlevel}
In the situation of theorem \ref{trivpartner}, we state that for the first to go agent 
only the even theory of mind levels $k\in \{0 \} \cup 2 \mathbb N$ show distinct 
behaviours, while the odd levels $k\in 2 \mathbb N -1$ behave like one level below, meaning $k-1$. 
For the second to go partner equivalently, only the odd levels $k \in \{0 \} \cup 2\mathbb N -1$ show 
distinct behaviours.
\end{theorem}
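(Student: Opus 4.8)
The plan is to reduce the parity statement to a single behavioural equivalence at the bottom of the cognitive hierarchy and then propagate it upward by a clean recursion. Write $\pi_1^{l}$ and $\pi_2^{l}$ for the softmax solution policies (obtained from the Bellman recursion~\eqref{Bellman}) of the first-to-go agent and the second-to-go agent at ToM level $l$, and let $\mathcal{F}_i$ denote the operator that maps a fixed model of the partner's policy to agent $i$'s level-consistent solution. By the ToM convention (Definition~\ref{ToMConvention}) an agent at level $l$ models its partner at level $l-1$, so
\[
\pi_1^{l} = \mathcal{F}_1(\pi_2^{l-1}), \qquad \pi_2^{l} = \mathcal{F}_2(\pi_1^{l-1}).
\]
Since $\mathcal{F}_i$ is a genuine function of its argument, the central observation is a \emph{lifting} step: if the partner's policy is unchanged between two consecutive levels, then so is the agent's policy one level higher, i.e. $\pi_2^{l-1}=\pi_2^{l-2}$ implies $\pi_1^{l}=\pi_1^{l-1}$, and symmetrically with the roles exchanged. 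The whole theorem then follows from one base collapse together with this cascade.

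First I would establish the base case $\pi_2^{0}=\pi_2^{-1}$: the level-$0$ second mover behaves exactly like a level-$(-1)$ second mover. By Theorem~\ref{trivpartner}, in the situation of Definition~\ref{cert} the level-$0$ second agent gains nothing from planning, so its solution coincides with its myopic (immediate-reward) policy. It remains to see that the extra structure a level-$0$ agent carries over a level-$(-1)$ agent -- an intentional model of the partner and the associated belief update~\eqref{Update} -- cannot change this myopic choice. This is where the first/second asymmetry enters: the second mover acts last within each exchange, so under Definition~\ref{cert} (no observational uncertainty) its immediate utility is a function only of the already-realised, fully observed physical allocation and of its own reward parameters, not of the inferred type of the first mover. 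Hence the expected-immediate-reward maximiser is identical whether or not the agent maintains beliefs about the partner, giving $\pi_2^{0}=\pi_2^{-1}$. The same argument fails for the first mover, whose immediate expected reward depends on the partner's as-yet-unobserved response and hence on inference about the partner's type; this is exactly why $\pi_1^{0}\neq\pi_1^{-1}$ and why level $0$ is retained as distinct for the first-to-go agent.

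With the base case in hand, I would run the cascade. From $\pi_2^{0}=\pi_2^{-1}$ the lifting step gives $\pi_1^{1}=\pi_1^{0}$; feeding this back gives $\pi_2^{2}=\pi_2^{1}$; then $\pi_1^{3}=\pi_1^{2}$; and so on. A straightforward induction yields $\pi_1^{2m+1}=\pi_1^{2m}$ for all $m\geq 0$ and $\pi_2^{2m}=\pi_2^{2m-1}$ for all $m\geq 1$. This is precisely the claimed pattern: for the first-to-go agent only the even levels $\{0\}\cup 2\mathbb{N}$ are behaviourally distinct while each odd level collapses to the one below, and for the second-to-go agent only the levels $\{0\}\cup(2\mathbb{N}-1)$ are distinct while each even level above $0$ collapses to the one below.

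I expect the main obstacle to be the base case rather than the cascade. The lifting step is essentially formal once one accepts that $\mathcal{F}_i$ is a well-defined function of the partner model, but the equivalence $\pi_2^{0}=\pi_2^{-1}$ requires care: one must argue both that planning is valueless (supplied by Theorem~\ref{trivpartner}) \emph{and} that no value of information about the partner's type can leak into the second mover's decision, which rests on the task-specific fact that the second mover's reward is measurable with respect to the realised allocation alone. A secondary, weaker point is the genuine \emph{distinctness} of the retained levels: the theorem implicitly claims the non-collapsed levels differ, which is not a logical consequence of the recursion but holds generically, because each retained step first introduces a partner who plans (and hence can be steered), changing the best response for suitable reward parameters; I would flag this as a genericity statement rather than attempt to prove strict inequality for every parameter setting.
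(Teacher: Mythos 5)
Your proposal is correct and takes essentially the same route as the paper's own proof: a base collapse showing the level-$0$ second mover behaves like a level-$(-1)$ agent (resting on Theorem~\ref{trivpartner}), followed by an alternating lift-and-feed-back induction up the cognitive hierarchy. Your write-up is in fact somewhat more careful than the paper's --- notably the explicit argument that the second mover's immediate utility is measurable with respect to the realised allocation alone (so partner-type inference cannot leak into its myopic choice), and the observation that genuine \emph{distinctness} of the retained levels is a separate genericity claim the theorem leaves implicit --- but the decomposition and the key steps coincide.
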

\begin{proof}
In the scenario described in theorem \ref{trivpartner}, the second to go level $0$ agent behaves like a level $-1$ agent, 
as it does not benefit from modeling the partner. This implies that the first to go agent, gains no additional  
information at the level $1$ thinking, since the partner behaves like level $-1$, which was 
modeled by the level $0$ first to agent already. In turn, the level $2$ second to go agent
gains no additional information over the level $1$ second to go agent, as the their partner model 
does not change between modeling the partner at level $0$ or level $-1$.
By induction, we get the result. \qedhere
\end{proof}
Examples of the additional simplifications in theorems \ref{trivpartner}
and \ref{trivlevel} can be seen in the ultimatum game and the multi round trust
game.
\subsection{The Trust Task}\label{TruPlan}
\begin{center}
\includegraphics[width=4in, height = 2in]{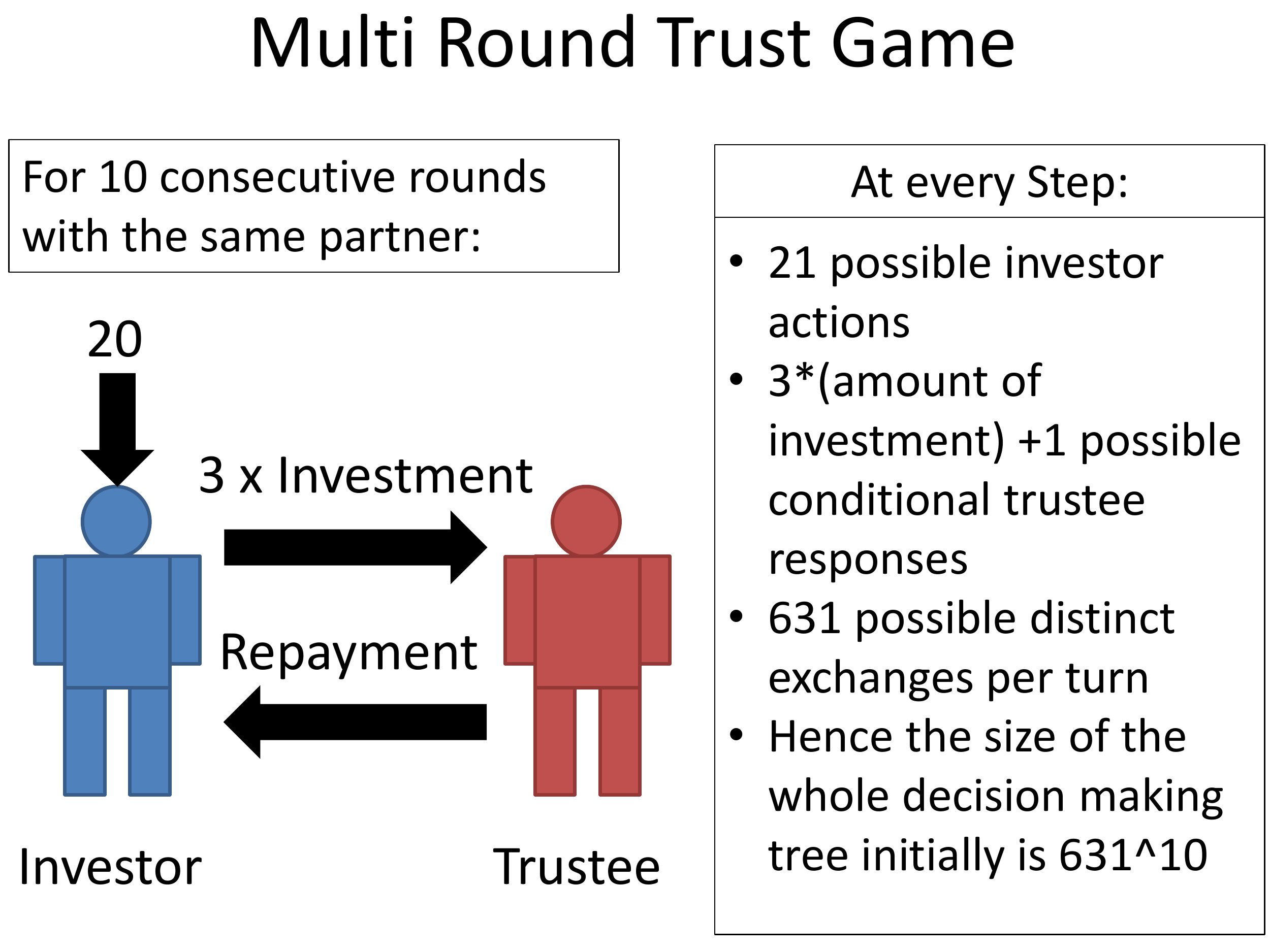}
	\captionof{figure}{\small{ Physical features of the multi round trust game.}
	\label{fig:trusttask}}
\end{center}
The multi-round trust task, illustrated in figure~\ref{fig:trusttask} is
a paradigm social exchange game. It involves two people, one playing
the role of an 'investor' the other the one of a 'trustee', over $10$ sequential rounds, expressed 
by a time index $t = 1, 2, \ldots, 10$. Both agents
know all the rules of the game.  In each round, the investor receives an
initial endowment of $20$ monetary units. The investor can send any of this
amount to the trustee. The experimenter trebles this quantity and then
the trustee decides how much to send back to the investor, between $0$
points and the whole amount that she receives. The repayment by the
trustee is not increased by the experimenter. After the trustee's
action, the investor is informed, and the next round starts. 
We consider the trust task as an IPOMDP with two agents, i.e., $\mathcal
I = \{ I , T \}$ contains just $I$ for the investor and $T$ for the
trustee. We consider the state to contain two components; one physical
and observable (the endowment and investments), the other non-physical
and non-observable (in our case, parameters of the utility function). It
is the latter that leads to the partial observability in the IPOMDP.
Following \cite{Debbs2008}, we reduce complexity by quantizing the
actions and the (non-observable) states of both investor and trustee --
shown for one complete round in figure ~\ref{fig:trustdisc}.
The actions are quantized into $5$ fractional categories shown in figure 
~\ref{fig:trustdisc}.
 For the
investor, we consider $a^I\in \{ 0, 0.25, 0.5, 0.75, 1 \}$
(corresponding to an investment of \$20$\times a^I$, and encompassing even
investment ranges). For the trustee, we consider $a^T\in \{0, 0.167,
0.333, 0.5, 0.67 \}$ (corresponding to a return of \$3$\times$20$\times a^Ia^T$, and
encompassing even return ranges). Note that the trustee's action is
degenerate if the investor gives $0$. The pure
monetary payoffs for both agents in each round are
\[
\mathrm{investor:} \chi^I (a^I, a^T) = 20 - 20 a^I + 3 \times 20 a^I
a^T. 
\]
\[
\mathrm{trustee:} \chi^T(a^I, a^T) = 3 \times 20 a^I -   3 \times 20 a^I a^T.
\]
The payoffs of all possible combinations and both partners are depicted
in figure~\ref{fig:trustpay}.  
In IPOMDP terms, the investor's physical state is static, whereas the
trustee's state space is conditional on the previous action of the
investor. The investor's possible observations are the trustees
responses, with a likelihood that depends entirely on the investor's
intentional model of the trustee. The trustee observes the 
investor's action, which also determines
the trustee's new physical state, as shown in figure ~\ref{fig:truststate}.
\begin{center}
\vbox{\hbox{\includegraphics[width=3in, height = 2in]{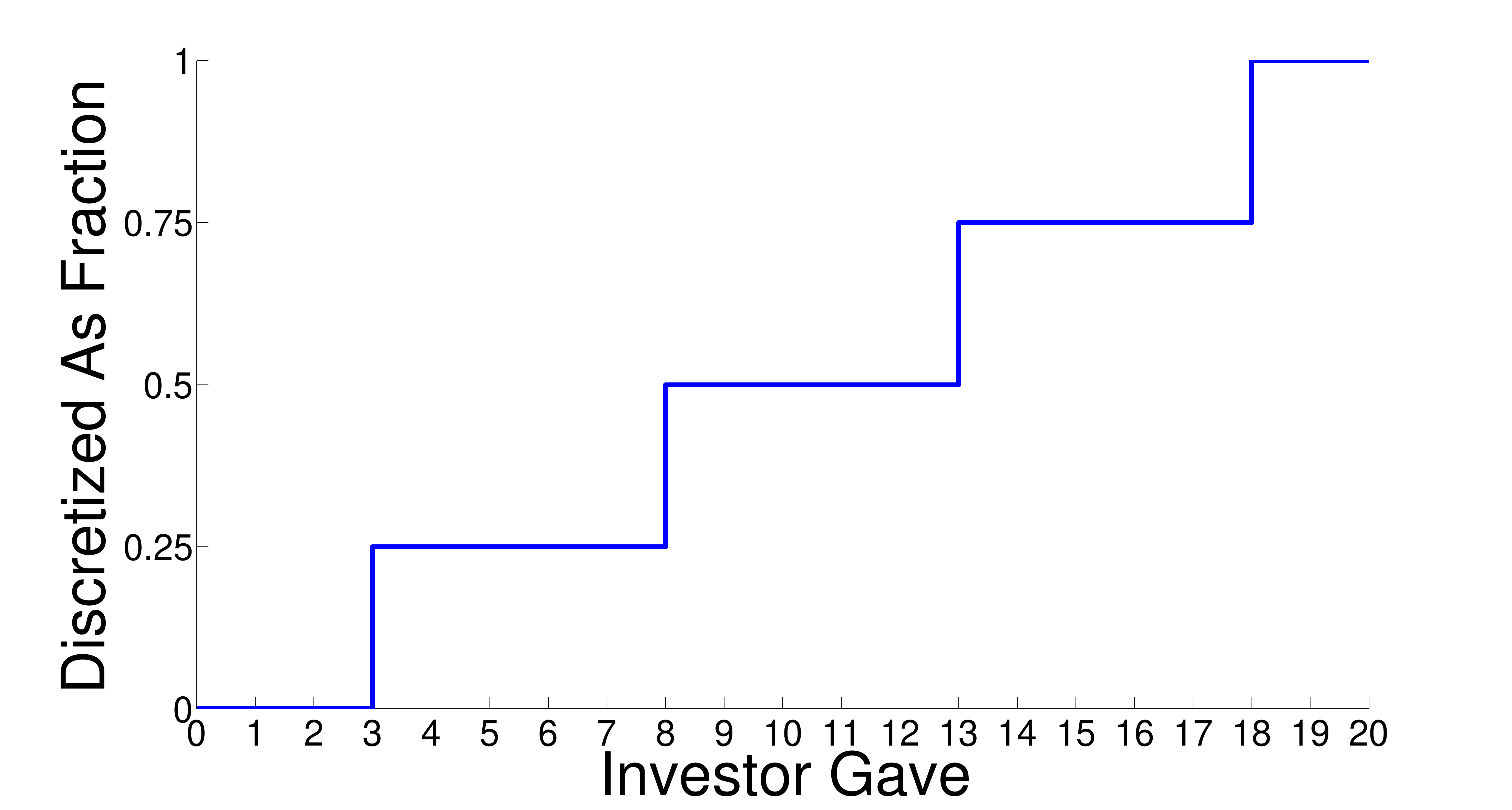}\includegraphics[width=3in, height = 2in]{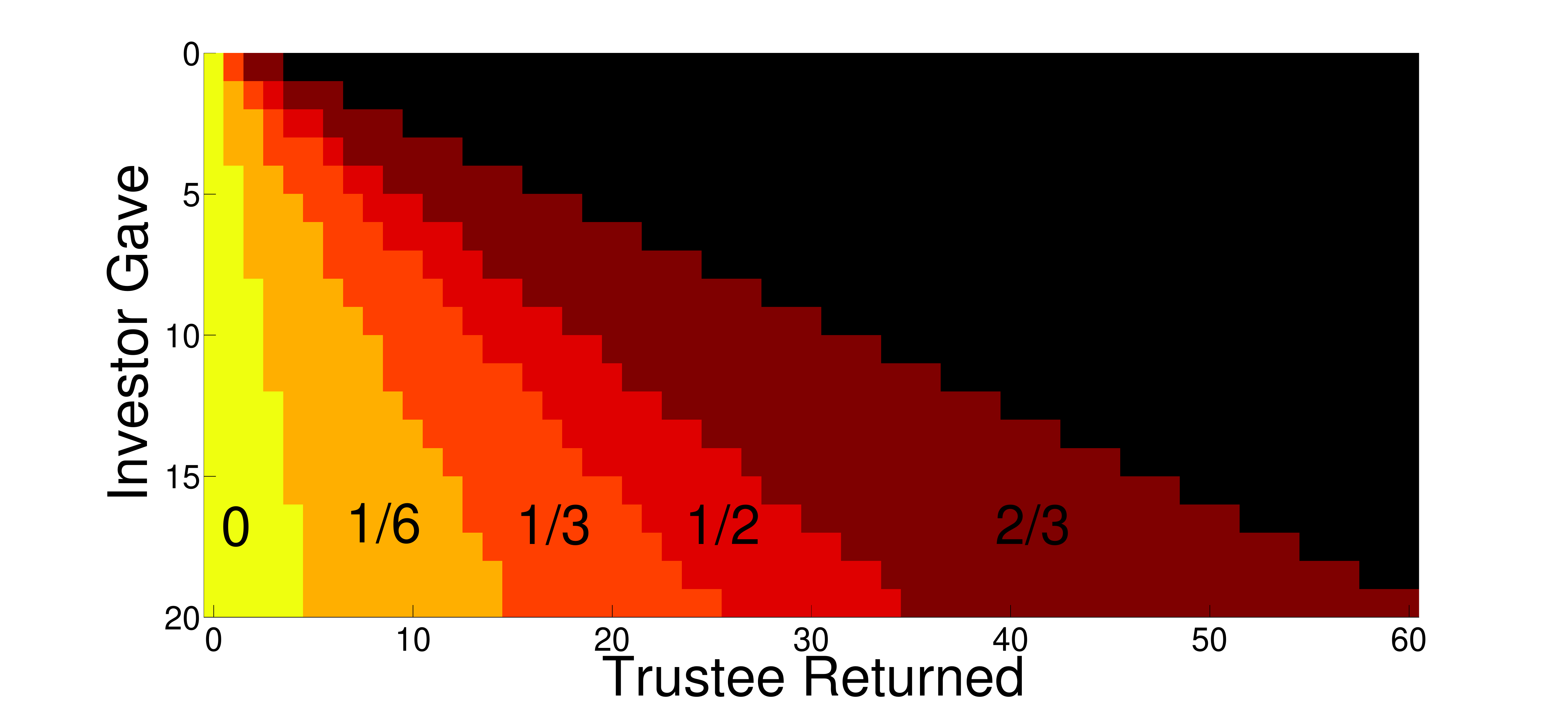}}}
	\captionof{figure}{\small{ Investor: (left) The $21$ possible actions are summarized into $5$ possible investment
 categories. Trustee: ( right) returns are classified into $5$ possible categories, conditionally on investor action. Impossible returns are marked in black.  }
\label{fig:trustdisc}}
\end{center}
\begin{center}
 \includegraphics[width=6in, height = 2in]{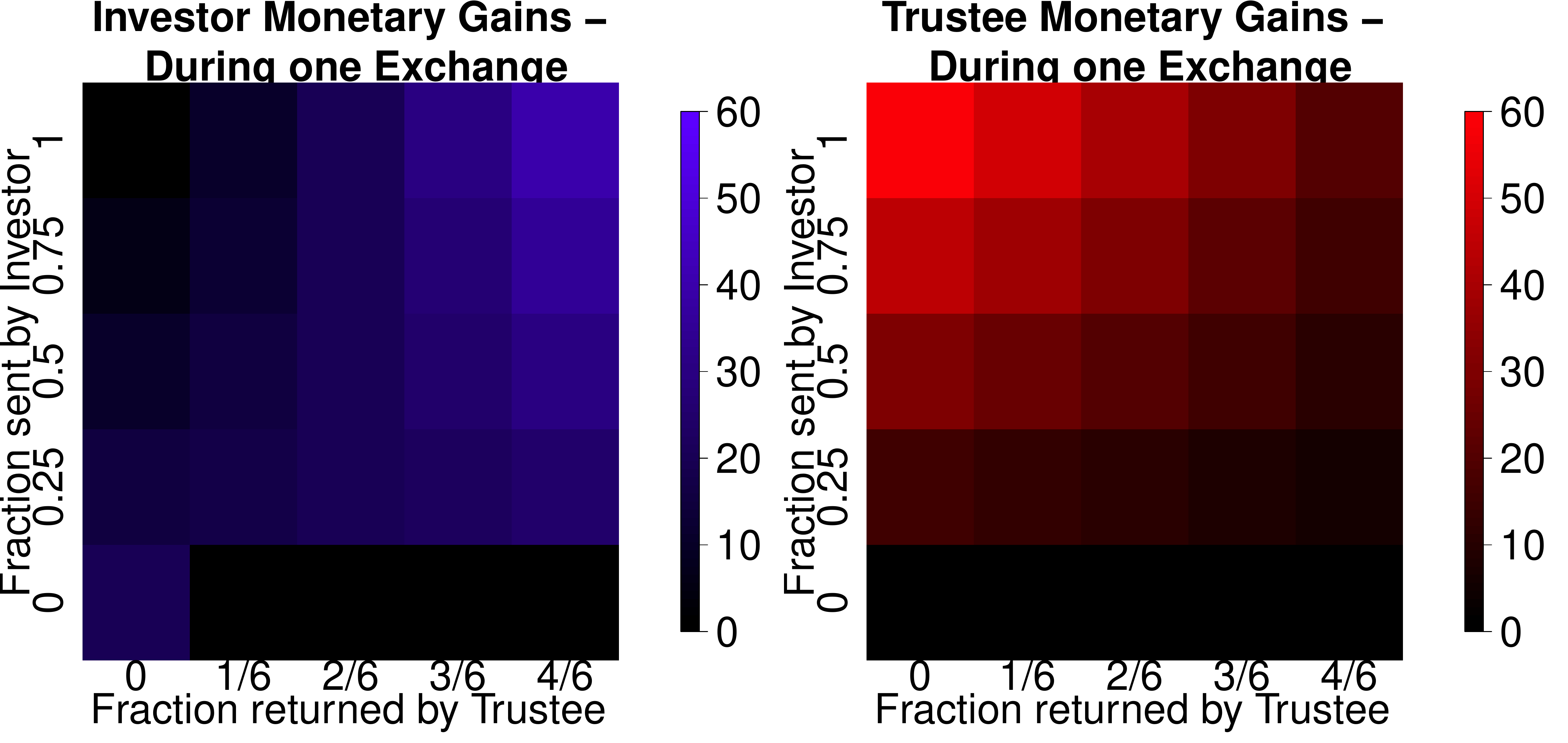} 
	\captionof{figure}{ \small{Payoffs in the multi round trust task. (left) Investor payoffs for an single exchange. (right) Trustee payoffs for an single exchange.}
	\label{fig:trustpay}}
\end{center}
\begin{center}
\vbox{\hbox{\includegraphics[width=3in, height = 2in]{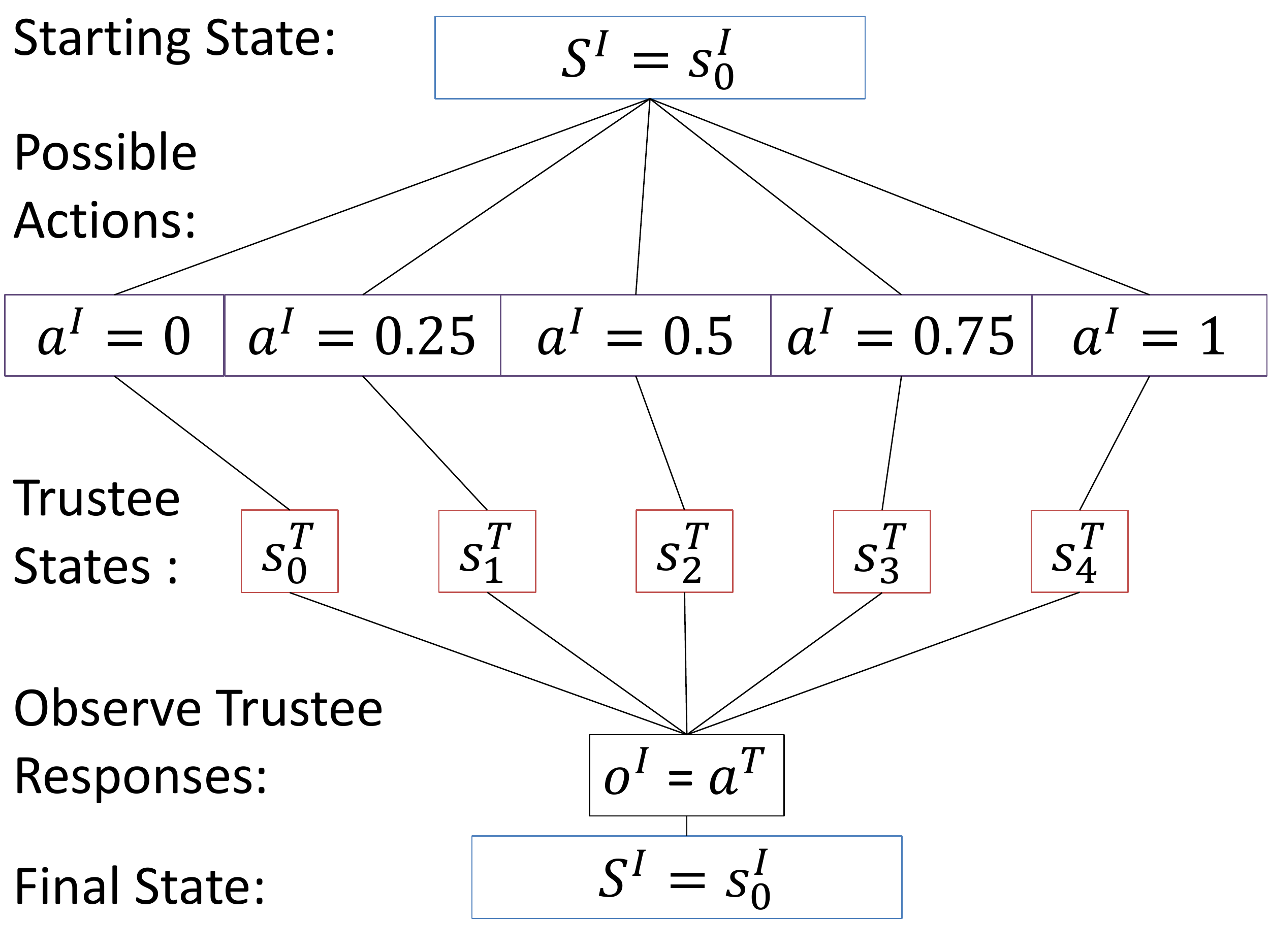} \includegraphics[width=3in, height = 2in]{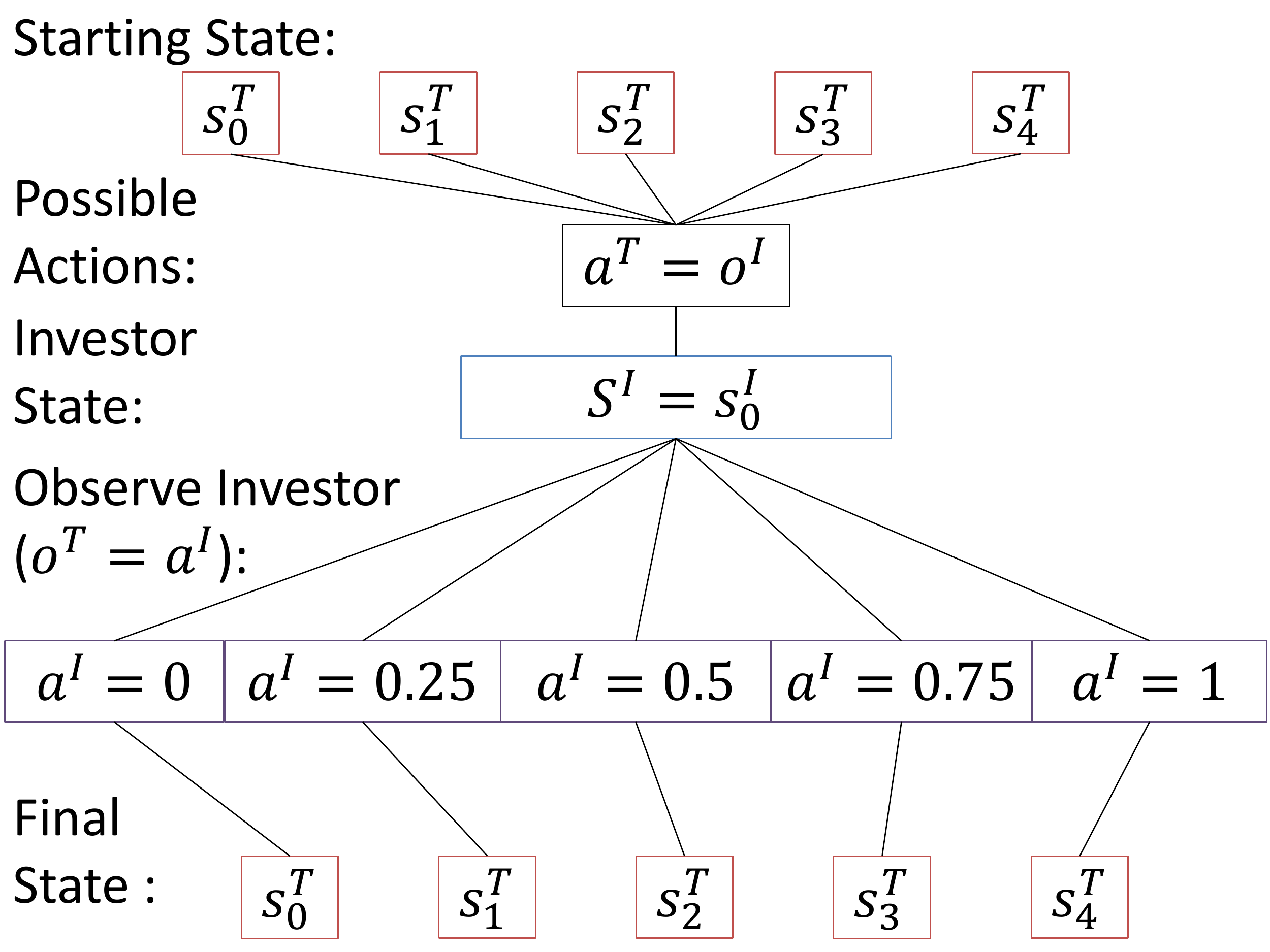} }}
	\captionof{figure}{\small{(Physical) Transitions and Observations: (left) Physical state transitions and observations of the investor. 
The trustee's actions are summarized to $a^T$, as they can not change the following physical state transition. (right) Physical state transitions and observations of the trustee.
The trustee's actions are summarized to $a^T$, as they can not change the following physical state transition.}
\label{fig:truststate}}
\end{center}
\subsubsection{Inequality Aversion - Compulsion to Fairness}
The aspects of the states of investor and trustee that induce partial observability are assumed to arise from differential levels of cooperation.
\begin{center}

\includegraphics[width=6in, height = 6in]{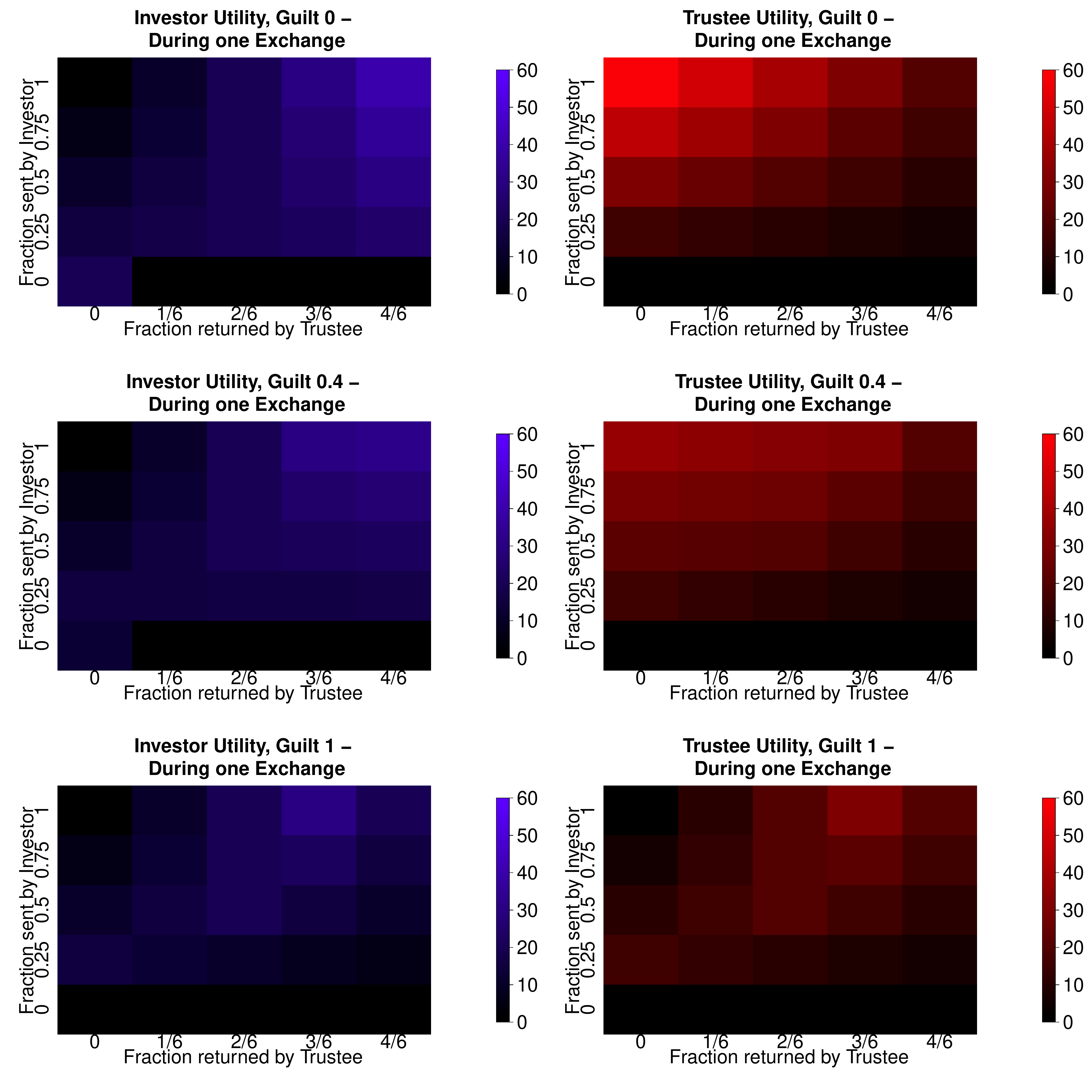} 
	\captionof{figure}{\small{Immediate Fehr-Schmidt utilities for a single exchange \cite{Fehr}. Left column shows investor preferences:  (top left) 
Completely unguilty investor values only the immediate payoff, (middle left) Guilt $0.4$ investor is less likely to keep everything to 
themselves (bottom left corner option), (bottom left) Guilt $1$ investor will never keep everything to themselves (bottom left option). 
Right column shows trustee preferences: (top right) unguilty trusty would like to keep everything to themselves. (middle right) 
Guilt $0.4$ is more likely to return at least a fraction of the gains. (bottom right) Guilt $1$ trustee will strife to return the fair split always.}
	\label{fig:FSUtil}}
\end{center}
One convenient (though not unique) way to characterize this is via the Fehr-Schmidt inequality aversion utility function (figure $\ref{fig:FSUtil}$). This 
allows us to account for the observation that many trustees 
 return an even split even on the last exchange of the $10$ rounds, even though no further gain is possible. 
We make no claim that this is the only explanation for such behaviour, 
but it is a tractable and well-established mechanism that has been used
successfully in other tasks (\cite{Norm,  Fehr, FehrCamerer} ).  
  For the investor, this suggests that:
\begin{equation}\label{FS}
r^I (a^I, a^T, \alpha^I) =\chi^I(a^I,a^T) -\alpha^I \max \{ \chi^I(a^I,a^T) -\chi^T (a^I, a^T),0\}.
\end{equation}
Here, $\alpha^I$ is called the "guilt" parameter of the investor and
quantifies their aversion to unequal outcomes in their favor. We quantize guilt into
$3$ concrete guilt types $\{0,0.4,1\} = \{ \alpha_1,\alpha_2, \alpha_3 \}$. 
Similarly, the trustee's utility is 
\begin{equation}\label{FST}
r^T(a^I, a^T, \alpha^T) = \chi^T(a^I,a^T) -\alpha^T \max \{ \chi^T(a^I,a^T)-\chi^I(a^I,a^T) , 0\},
\end{equation}
with the same possible guilt types. We choose these particular values, 
as guilt values above $0.5$ tend to produce similar behaviours as $\alpha =1$ and the values below $0.3$ tend to 
behave very similar to $\alpha =0$. Thus we take $\alpha_1$ to represent guilt values in $[0, 0.3]$, $\alpha_2$ to represent 
guilt values in $(0.3, 0.5)$ and $\alpha_2$ to represent guilt values in $[0.5,1]$.  
We assume that neither agents' actual guilt type changes during the $10$
exchanges.
\subsubsection{Planning Behaviour}
The survival functions $H^I$ and $H^T$ are
used to delimit the planning horizon. The agents are required not to 
plan beyond the end of the game at time $10$ and within that constraint they 
are supposed to plan $P$ steps ahead into the interaction. This results in the 
following form for the survival functions (regardless whether for 
investor or trustee):
\begin{equation}
H_P (\tau, t) =1, \quad (\tau-t) \leq P \land (\tau+t) \leq 10, \qquad
H_P(\tau,t) =0, \quad (\tau-t) > P \lor (\tau+t) > 10. 
\end{equation}
The value $P$ is called the planning horizon. We consider $P\in \{
0,2,7\}$ for immediate, medium and long planning types. We chose these values
as $P=7$ covers the range of behaviours from 
$P=4$ to $P=9$, while planning $2$ yields compatibility to earlier works (\cite{Debbs2008, Ting2012}) and 
allows to have short planning but high level agents, covering the range of behaviours for planning $P=1$ to $P=3$. We confirm later that the 
behaviour of $P=7$ and $P=9$ agents is almost identical; and the former saves memory 
and processing time. Agents are characterized to assume their opponents have the same degree of 
planning as they do.
The discounting factors $\gamma^I$ and $\gamma^T$ are set to $1$ in our setting.
\subsection{Belief State}
Since all agents use 
their own planning horizon in modeling the partner and level $k$ agents model their partner at level $k-1$, inference in 
intentional models in this analysis is restricted to the guilt parameter $\alpha$.  Using a 
categorical distribution on the guilt parameter and Dirichlet prior on the probabilities 
of the categorical distribution, we get a Dirichlet-Multinomial distribution for the probabilities of 
an agent having a given guilt type at some point during the exchange.
Hence $\mathcal B_0$ is a Dirichlet-Multinomial distribution
,
\[
\mathcal B_0  \sim DirMult(a_0), \qquad a_0 = (1,1,1)
\]
with the initial belief state
\[
\mathbb P [\alpha^{\textrm{partner}}=\alpha_i| h = \emptyset] = \frac{1}{3}.
\]
Keeping consistent with the model in ~\cite{Ting2012}, our approximation of the posterior distribution is also a Dirichlet-Multinomial distribution 
with the parameters of the Dirichlet prior being updated to
\begin{equation}\label{Belief}
a_{t+1}^{i}=a_t^{i} +\mathbb P[o_{t+1}=\textrm{observed action}| \alpha^{\textrm{partner}}=\alpha_i].
\end{equation}
writing $\alpha^{\textrm{partner}}$ for the intentional models.

\subsubsection{Theory of Mind Levels and Agent Characterization}\label{ToMActual}
Since the physical state transition of the trustee is fully dependent on the investors' action
and one agents' guilt type can not be changed by the actions of the other agent,  
theorem \ref{trivpartner} implies that the level $0$ trustee is trivial, gaining nothing from planning ahead. 
Conversely, the level $0$ investor can use a recombining tree as in theorem \ref{recomb}. Therefore, the chain of 
cognitive hierarchy steps for the investor is $l^I \in\{ 0 \} \cup  \{ 2n | n\in\mathbb N  \}$, and for the trustee, it is $l^T\in\{ 0 \} \cup \{ 2n-1 | n\in\mathbb N \} $. 
Trustee planning is trivial until the trustee does at least reach theory of mind level $1$.
Assuming $\beta = \frac 1 3$ in $\ref{eq:softmax}$, determined empirically from real subject data \cite{Ting2012} for suitably noisy behaviour, 
our subjects are then characterized via the triplet  $(k ,\alpha , P)$ of theory of mind level $k$, guilt parameter $\alpha \in \{0, 0.4, 1 \}$ and 
planning horizon $P\in\{ 0,2,7 \}$.

\subsection{Level $-1$ and POMCP rollout mechanism}

The level $-1$ models are obtained by having the level $-1$ agent always assume all partner 
types to be equally likely ($\mathbb P[\alpha^{\textrm{partner}}=\alpha_i  ]= \frac 1 3, \forall i$), 
setting the planning horizon to $0$, meaning the partner acts on immediate 
utilities only, and calculating the agent's expected utilities after marginalizing over partner types 
and their respective response probabilities based on their immediate utilities.

In the POMCP treatment of the multi round trust game, if a simulated agent reaches 
a given history for the first time, a value estimate for the new node is 
derived by treating the agent as level $-1$ 
and using an $\epsilon$-greedy decision making mechanism on the expected utilities  to determine their actions until the present planning horizon.

\section{Results}

We adapted the POMCP algorithm \cite{POMCP2010} to solve IPOMDPs
\cite{IPOMDP}, and cast the multi-round trust task as an IPOMDP that
could thus be solved. We made a number of approximations that were
prefigured in past work in this domain \cite{Debbs2008, Ting2012}, and also made
various observations that dramatically simplified the task of planning,
without altering the formal solutions. This allowed us to look at longer
planning horizons, which is important for the full power of the
intentional modeling to become clear.

Here, we first seek to use this new and more powerful planning method to
understand the classes of behaviour that arise from different settings
of the parameters in section \ref{ParadigmBehaviours}. From the study of human interactions \cite{Brooks2008}, the importance of coaxing (returning more than the fair split) has been established.
From our own study of the data collected so far, we define four coarse types of 'pure'
interactions, which we call
"Cooperation", "Coaxing to Cooperation", "Coaxing to Exploitation", "Greedy" ; we
conceptualize how these might arise. We also delimit the potential consequences of
having overly restricted the planning horizon in past work in this
domain, and examine the qualitative interactive signatures (such as how
quickly average investments and repayments rise or fall) that might
best capture the characteristics of human subjects playing the game.

We then continue to discuss the quality of statistical inference, 
by carrying out model inversion for our new method in 
section \ref{ConfusionDiscussion} and comparing to earlier work in 
this domain \cite{Debbs2008}.

Finally, we treat real subject data collected for an earlier study (\cite{Ting2012}) 
in section \ref{EarlierWork} 
and show that our new approach recovers significant behavioural differences 
not obtained by earlier models and offers a significant improvement in the 
classification of subject behaviour through the inclusion of the planning 
parameter in the estimation and the quality of estimation on the 
trustee side. 

The materials used in this section, as well as the code used to generate them, 
can be found on Andreas Hula's \href{https://github.com/AndreasHula/Trust}{github  repository}. 
All material was generated on the local WTCN cluster. 
We used R \cite{RStat} and Matlab \cite{MATLAB:2010} for data analysis 
and the boost C++ libraries \cite{BoostSite} for code generation. 

\subsection{Modalities}

All simulations were run on the local cluster at the Wellcome Trust
Centre for Neuroimaging.  For sample paths and posterior distributions,
for each pairing of investor guilt, investor sophistication and trustee
guilt and trustee sophistication, $60$ full games of $10$ exchanges
each were simulated, totaling $8100$ games. Additionally, in order to validate the
estimation, a uniform mix of all parameters was used, implying a total 
of $2025$ full games.

To reduce the variance of the estimation, we employed a pre-search
method. Agents with ToM greater than $0$ first explored the constant
strategies (offering/returning a fixed fraction) to obtain a minimal set
of $\tilde{Q}$ values from which to start searching for the optimal
policy using {\sc SoftUCT}. This ensures that inference will not "get
stuck" in a close-to-optimal initial offer just because another initial
offer was not adequately explored. This is more specific than just
increasing the exploration bonus in the {\sc SoftUCT} rule, which would
diffuse the search during all stages, rather than helping search from a
stable initial grid.

We set a number $n$ of simulations for the initial step, where the beliefs about the partner are still uniform and the time horizon is still furthest away. We then reduce the number of simulations as the time horizon approaches $(n, n \frac 9 {10}, n\frac 8 {10}, \ldots, n\frac 1 {10})$ .

\subsection{Simulation And Statistical Inference}
Unless stated otherwise, we employ an inverse temperature in the softmax
of $\beta =\frac 1 3$ (noting the substantial scale of the rewards). The
exploration constant for POMCP was set to $c = 25$. The initial
beliefs were uniform $a_i = 1, \forall i$, for each
subject. For the $3$ possible guilt types we use the following
expression while in text: $\alpha = \alpha_1$ is "greedy",
$\alpha = \alpha_2$ is "pragmatic" and $\alpha = \alpha_3$ is
"guilty". However, on all the graphs, we give the exact model
classification in the form $I: (k^I, \alpha^I, P^I)$ for the investor
and $T:(k^T, \alpha^T, P^T)$ for the trustee. 

We
present average results over multiple runs generated stochastically from
each setting of the parameter values. In the figures, we report the
\emph{actual}
characteristics of investor and trustee; however, in
keeping with the overall model, although each agent knows their own
parameters, they are each inferring their opponents' degree of guilt
based on their initial priors.

As a consequence of the observation in section~\ref{ToMActual}, we only
consider $k\in\{0,2\}$ for the investor and $k\in\{0,1 \}$ for the
trustee. Planning horizons are restricted to $P\in\{0,2,7\}$, as noted
in section~$\ref{TruPlan}$, with the level $0$ trustee always having a
planning horizon of $0$.

Actions for both agents are parametrized as in
section~\ref{TruPlan} and averaged across identical parameter
pairings. In the graphs, we show actions in terms of the percentages of
the available points that are offered or returned. For the investor, the
numerical amounts can be read directly from the graphs; for the trustee,
these amounts depend on the investor's action. In the figures, we report
the \emph{actual} characteristics of investor and trustee; however, in
keeping with the overall model, although each player knows their own
parameters, they are each inferring their opponents' degree of guilt
based on their initial priors.

Dual to generating behaviour from the model is to invert it to find
parameter settings that best explain observed interactions
\cite{Debbs2008,Ting2012}. Conceptually, this can be done by simulating
exchanges between partners of given parameter settings $(k, \alpha, P)$,
taking the observed history of investments and responses, and using a
maximum likelihood estimation procedure which finds the settings for
both agents that maximise the chance that simulated exchanges between
agents possessing those values would match the actual, observed
exchange.  We calculate the action likelihoods through the POMCP method
outlined in the earlier section \ref{POMCP} and accumulate the negative log
likelihoods, looking for the combination that produces the smallest
negative loglikelihood. This is carried out for each combination of
guilt and sophistication for both investor and trustee.

\subsection{Paradigmatic Behaviours}\label{ParadigmBehaviours}

Figures~\ref{fig:cooperation} (with the additional outcome comparison in
figure~\ref{fig:gains}), and figures~\ref{fig:coaxexpl}
and~\ref{fig:coaxcoop} show the three characteristic types of behaviour,
in each case for two sets of parameters for investor and
trustee. The upper graphs show the average histories of actions of the
investor ( blue) and trustee (red) across the $10$ rounds; the middle
graphs show the mean posterior distributions over the three guilt
parameters ($0, 0.4, 1$) as estimated by the investor and the 
lower graphs show the mean posterior distribution by the trustee
(right) at four stages in the game (rounds 0, 3, 6 and 9). These show
how well the agents of each type are making inferences about their
partners.

Figure~\ref{fig:cooperation} shows evidence for strong cooperation
between two agents who are characterized by high inequity aversion (i.e., guilty).
Cooperation develops more slowly for agents with shorter (left) than
longer (right) planning horizons, enabling a reliable distinction
between different guilty pairs.  This is shown more explicitly in
figure~\ref{fig:gains} in terms of the total amount of money made by
both participants.  Both cases can be seen as cases of a tit 
for tat like approach by the players, although unlike a strict tit for tat mechanism 
the process leading to high level cooperation is generally robust against following below par 
actions by either player. Rather, high level players would employ coaxing to 
reinforce cooperation in this case. This is true even for lower level players, as after they 
have formed beliefs of the partner, they will not immediately reduce their offers 
upon a few low offers or returns, due to the Bayesian updating mechanism.

The posterior beliefs show both partners ultimately inferring the other's guilt type correctly in both 
pairings, however the $P^I=7$ investors remain aware of the possibility that the partners 
may actually be pragmatic and therefore the high level long horizon investors are prone to 
reduce their offers preemptively towards the end of the game. This data feature was 
noted in particular in the study \cite{Ting2012} and our generative model 
provides a generative explanation for it, based on the posterior beliefs of higher 
level agents explained above.

Figure~\ref{fig:coaxexpl} shows that level $1$ trustees employ coaxing
(returning more than the fair split) to get the investor to give higher
amounts over extended periods of time. In the example settings, the
level $0$ investor completely falls for the trustee's initial coaxing
(left), coming to believe that the trustee is guilty rather than
pragmatic until towards the very end. However, the level $2$ investor
(right) remains cautious and starts reducing offers soon after
the trustee gets greedy, decreasing their offers faster than if playing a truly guilty type.
The level $2$ investor on average remains ambiguous between the partner being guilty or pragmatic. 
Either inference prevents them from being as badly exploited as the level $0$ investor.

In these plots, investor and trustee both have long planning horizons;
we later show what happens when a trustee with a shorter horizon
($P^T=2$) attempts to deceive.
\newpage
\begin{center}
\includegraphics[width=6in, height = 6in]{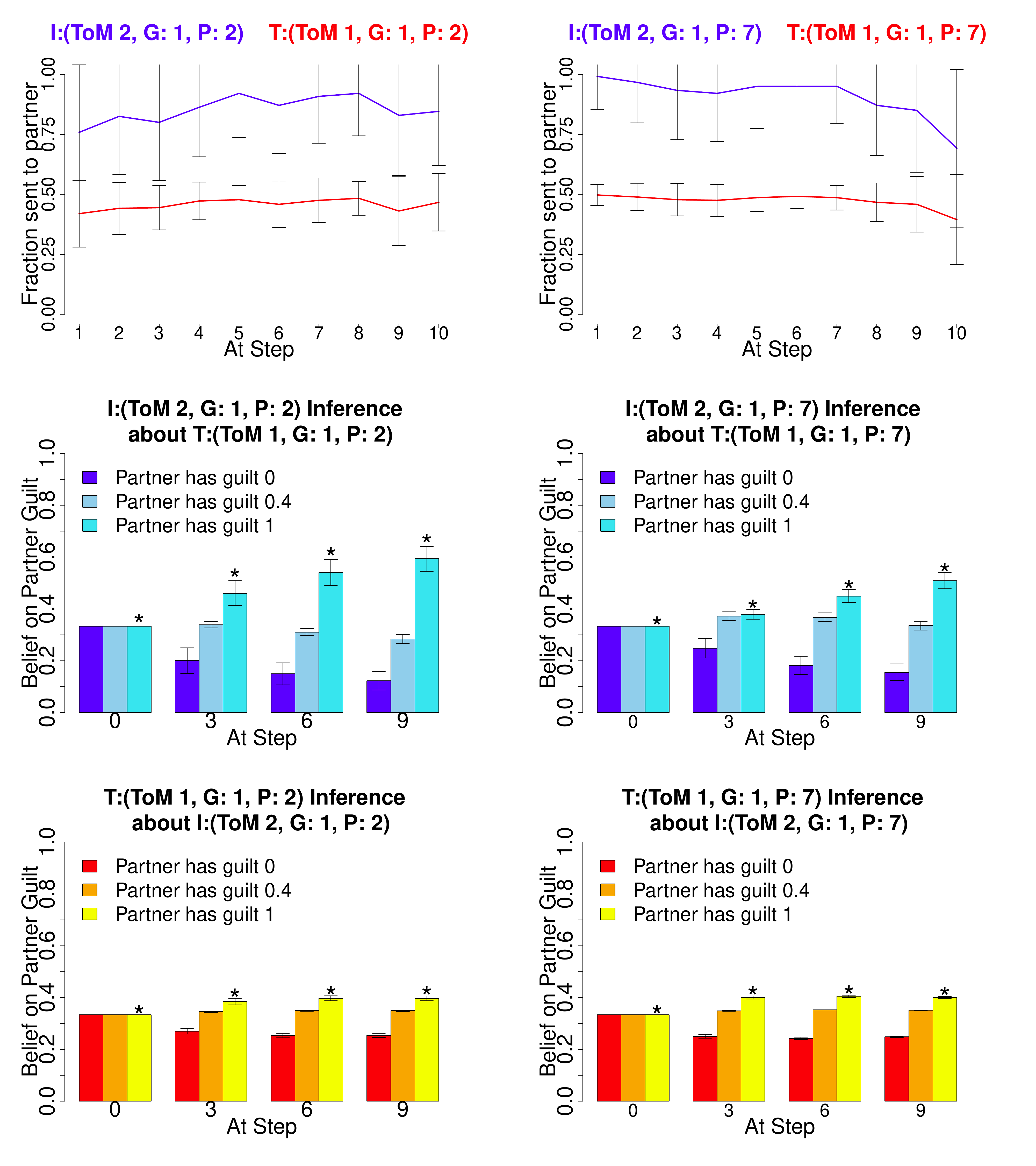}
	\captionof{figure}{\small{Averaged Exchanges (upper) and posteriors
	(mid and lower). Left plots: Investor $(k^I,\alpha^I, P^I)=(2,1,2)$;
	Trustee $(1,1,2)$; right plots: Investor $(2,1,7)$ and Trustee
	$(1,1,7)$. The posterior distributions are shown for
	$\alpha=(0,0.4,1)$ at four stages in the game. Error bars are standard deviations.
	The asterisk denotes the true partner guilt value.}
	\label{fig:cooperation}}
\end{center}

\begin{center}
\includegraphics[width =4 in]{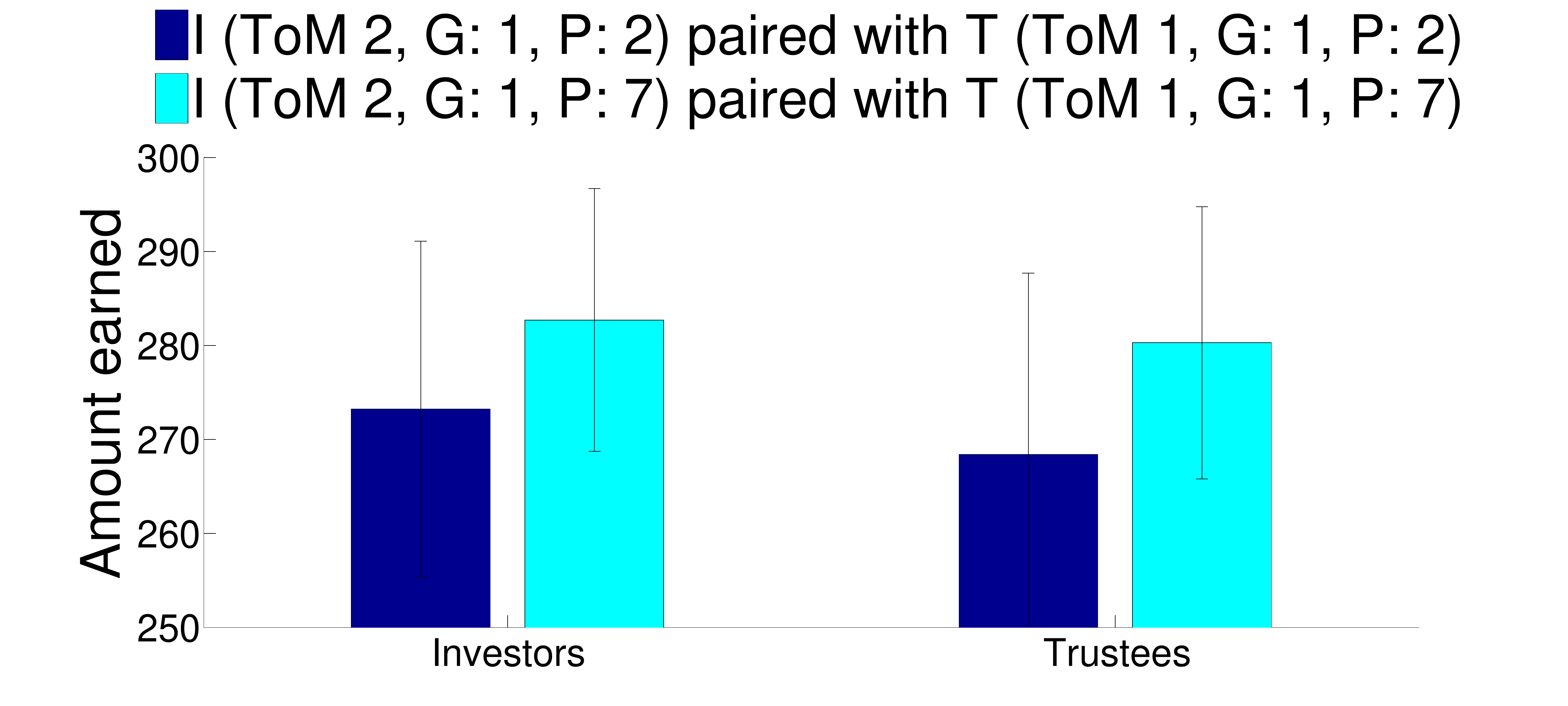}
\captionof{figure}{\small{Average Overall gains for the exchanges in figure
\ref{fig:cooperation}, with planning $2$ (dark blue) and $7$ (light blue). The
difference is highly significant ($p<  0.01$) at a sample size of $60$
for both parameter settings. Error bars are standard deviations.}} 
\label{fig:gains}
\end{center}

A level $1$ trustee can also get pragmatic investors to cooperate through coaxing, as demonstrated in figure \ref{fig:coaxcoop}. The returns are a lot higher than for a level $0$ guilty trustee, who lacks a model of their influence on the investor, and hence does not return enough to drive up cooperation. This initial coaxing is a very common behaviour of high level healthy trustees, trying to get the investor to cooperate more quickly, for both guilty and pragmatic 
high level trustees. 

\begin{center}
	\includegraphics[width=6in, height = 6in]{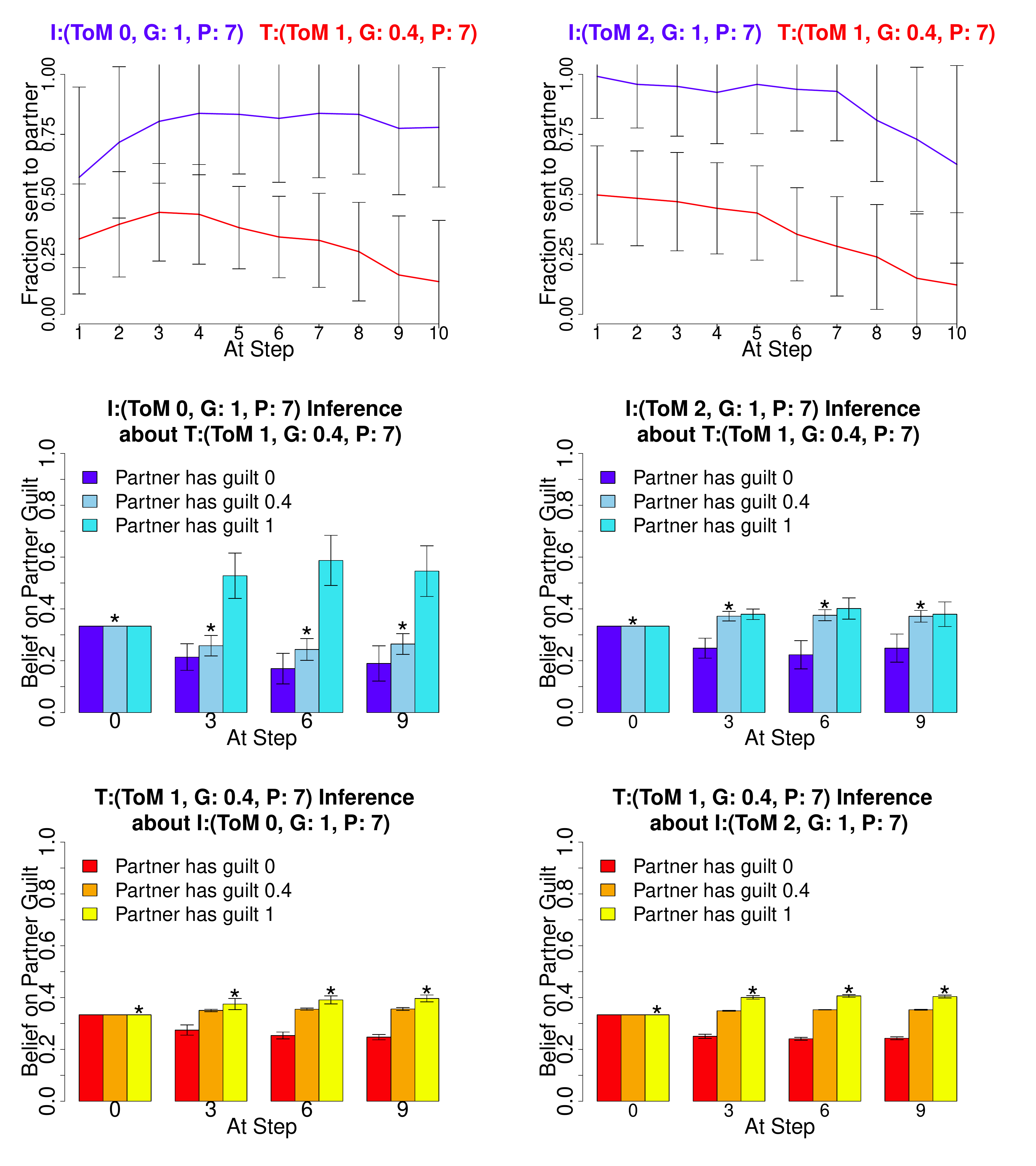}

	\captionof{figure}{\small{Averaged Exchanges (upper) and posteriors
	(mid and lower). Left plots: Investor $(k^I,\alpha^I,P^I)=(0,1,7)$;
	Trustee $(1,0.4, 7)$; right plots: Investor $(2,1, 7)$ and Trustee
	$(1,0.4, 7)$.  The posterior distributions are shown for
	$\alpha=(0,0.4,1)$ at four stages in the game. Error bars are standard deviations.
	The asterisk denotes the true partner guilt value.}
	\label{fig:coaxexpl}}
\end{center}

\vbox{
\begin{center}
\includegraphics[width=3in, height = 6in]{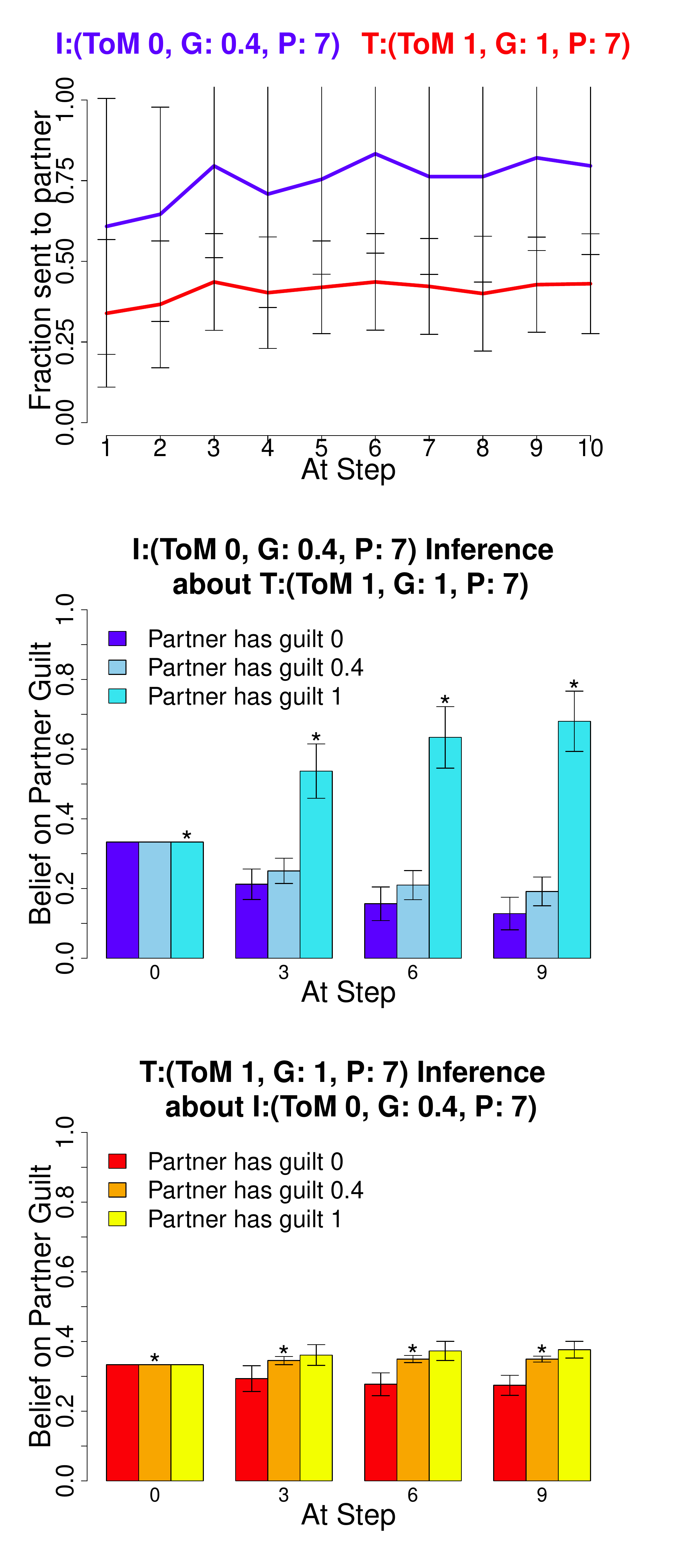}
	
	\captionof{figure}{\small{Average Exchanges (upper) and posteriors (mid and lower), Investor $(0,0.4,7)$ and
	Trustee $(1,1,7)$. The posterior distributions are shown for
	$\alpha=(0,0.4,1)$ at four stages in the game. Error bars are standard deviations.
	The asterisk denotes the true partner guilt value.} }
	\label{fig:coaxcoop}
\end{center}}

\subsection{Inconsistency or Impulsivity}\label{IMPULSIVE}
\vbox{
\begin{center}
\includegraphics[width=3in, height = 6in]{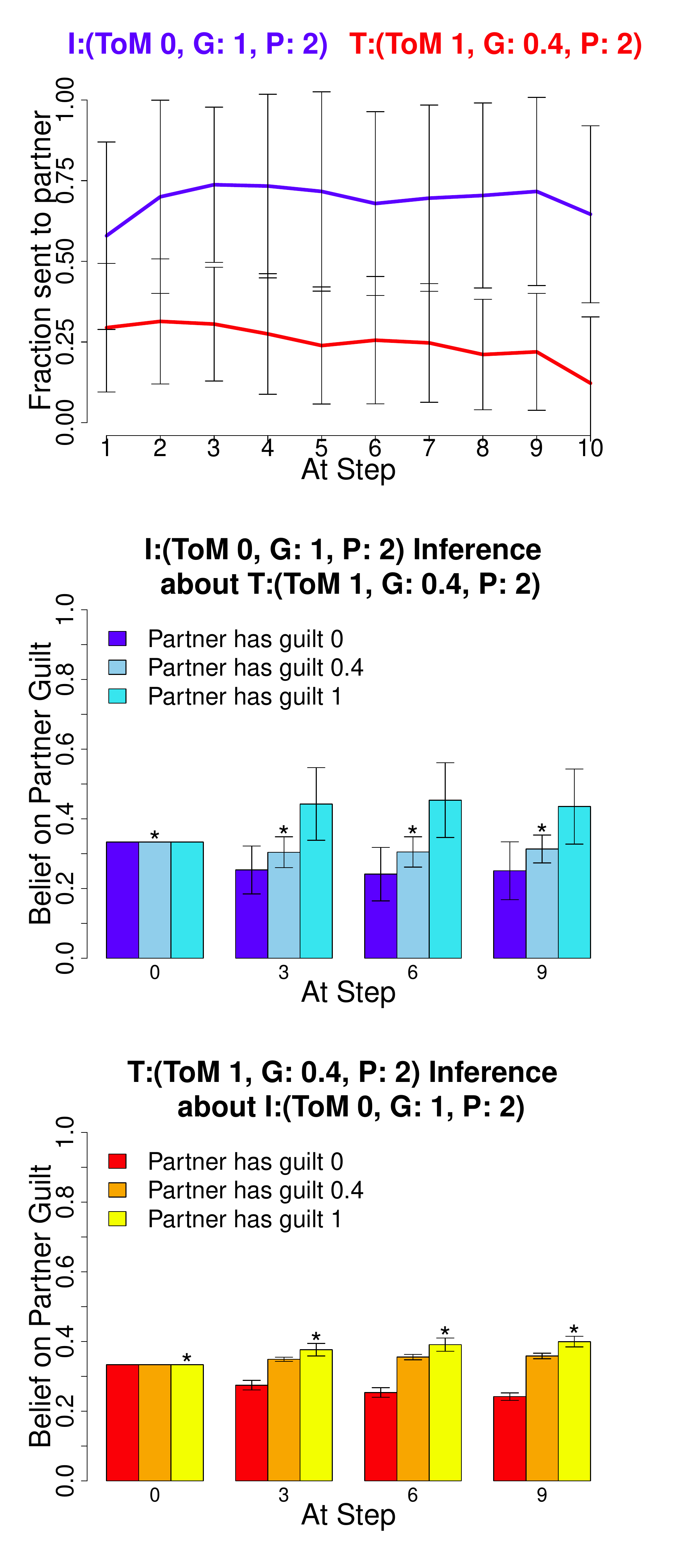}\\
\captionof{figure}{\small{Average Exchanges (upper) and posteriors (mid and lower), Investor $(0,1,2)$ and Trustee $(1,0.4,2)$.
The posterior distributions are shown for
	$\alpha=(0,0.4,1)$ at four stages in the game. Error bars are standard deviations.
	The asterisk denotes the true partner guilt value.}}
\label{fig:deceived}
\end{center}}

Trustees with planning horizon $2$ tend to find it difficult to 
maintain deceptive strategies. As can be seen in
figure \ref{fig:deceived}, even when both agents have a planning horizon
of $2$, a short sighted trustee builds significantly less trust than a long sighted one. This is because it fails to
see sufficiently far in the future, and exploits too early. This
planning horizon thus captures cognitive limitations or impulsive behaviour, while the planning
horizon of $7$ generally describes the consistent execution of a
strategy during play. Such a distinction may be very valuable for the
study of clinical populations suffering from psychiatric disorders such as attention
deficit hyperactivity disorder (ADHD) or borderline personality disorder
(BPD), who might show high level behaviours, but then fail to maintain
them over the course of the entire game. Inferring this requires the
ability to capture long horizons, something that had eluded previous methods.
This type of behaviour shows how important the 
availability of different planning horizons is for modeling, as earlier 
implementations such as \cite{Debbs2008} would treat this impulsive type
 as the default setting.

\subsection{Greedy Behaviour}
\begin{center}

\includegraphics[width=6in, height = 6in]{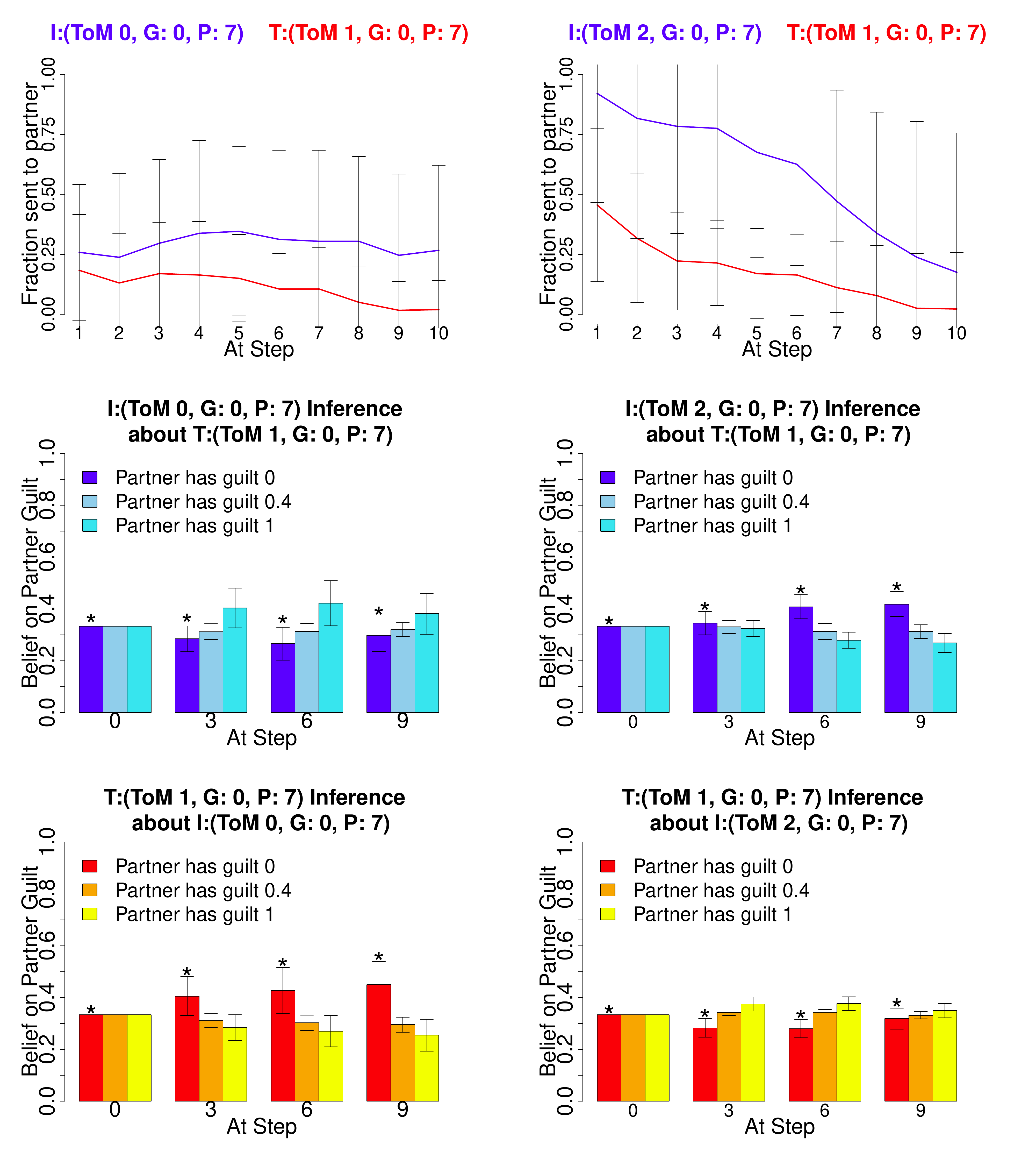}
	\captionof{figure}{\small{Averaged Exchanges (upper) and posteriors
	(mid and lower). Left plots: Investor $(k^I,\alpha^I, P^I)=(0,0,7)$;
	Trustee $(1,0,7)$; right plots: Investor $(2,0,7)$ and Trustee
	$(1,0,7)$.  The posterior distributions are shown for
	$\alpha=(0,0.4,1)$ at four stages in the game. Error bars are standard deviations.
	The asterisk denotes the true partner guilt value.}
	\label{fig:greed}}
\end{center}
Another behavioural phenotype with potential clinical significance arises
with fully greedy partners, see figure \ref{fig:greed}. Greedy low level
investors only invest very little, even if trustees try to convince them of
a high guilt type on their part as described above (coaxing). Cooperation
repeatedly breaks, which is reflected in the high variability of the
investor trajectory.  Two high level greedy types initially cooperate,
but since the greedy trustee egregiously over-exploits, cooperation
usually breaks down quickly over the course of the game, and is not
repaired before the end. In the present context, the greedy type appears
quite pathological in that they seem to hardly care at all about their
partners' type. The main exception to this is the level $2$ greedy
investor (an observation that underscores how theory of mind level and
planning can change behaviour that would seem at first to be hard coded
in the inequality aversion utility function). The level $0$ greedy
investor will cause cooperation to break down, regardless of their
beliefs, as in figure \ref{fig:greed} the posterior beliefs of the level
$0$ show that they believe the trustee to be guilty, but do not alter
their behaviour in the light of this inference.

\subsection{Planning Mismatch - High Level Deceived By Lower Level}\label{MISMATCH}

\begin{center}
\includegraphics[width=3in, height = 6in]{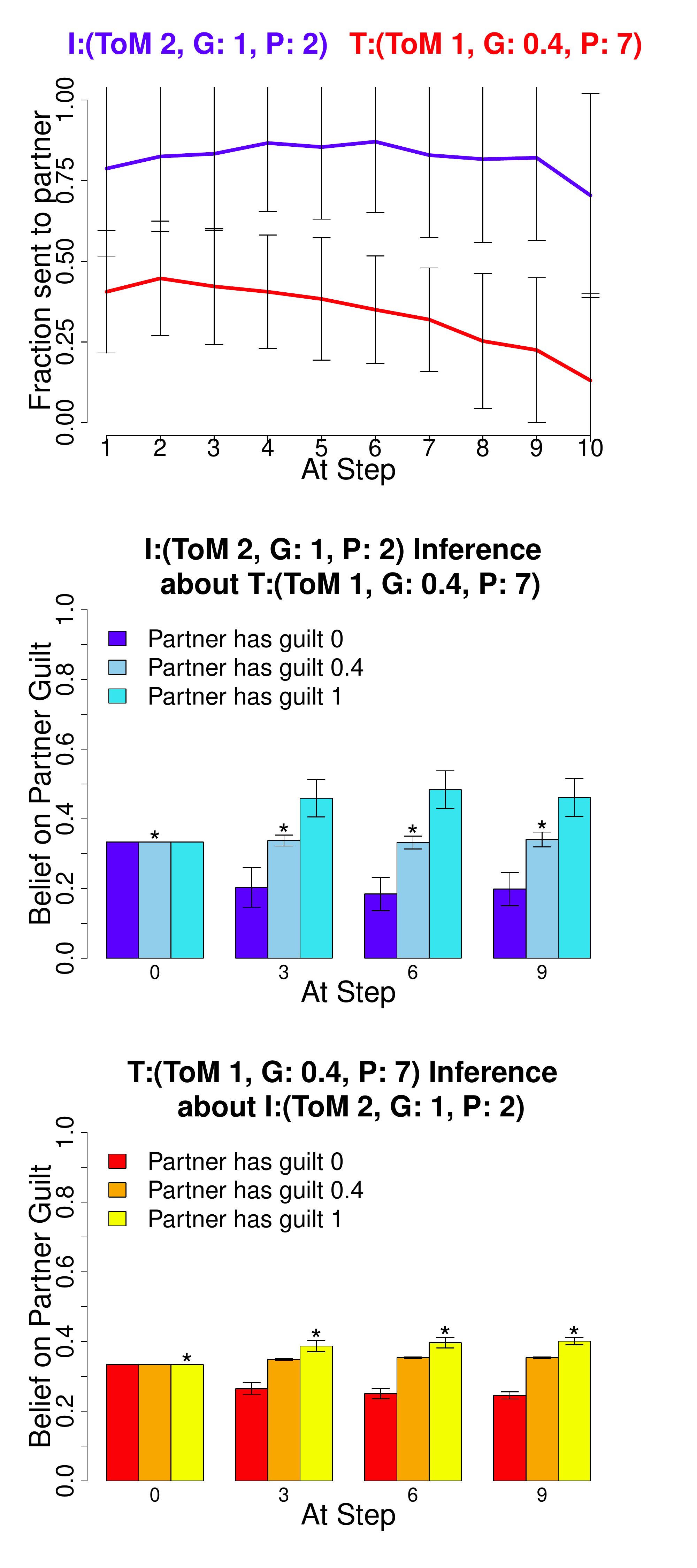}
\captionof{figure}{\small{Average Exchanges, Investor $(2,1,2)$ and Trustee $(1,0.4,7)$.  Error bars are standard deviations.
	The asterisk denotes the true partner guilt value.}}
\label{fig:mismatch}
\end{center}

In figure \ref{fig:mismatch}, the investor is level $2$, and so should have the wherewithal to understand the level $1$ trustee's deception. However, the 
trustee's longer planning horizon permits her to play more consistently, and thus exploit the investor for almost the entire game. This shows that the 
advantage of sophisticated thinking about other agents can be squandered given insufficient planning, and poses an important question about the 
efficient deployment of cognitive resources to the different demands of modeling and planning of social interactions.

\subsection{Confusion}\label{ConfusionDiscussion}

\subsubsection{Model Inversion}

A minimal requirement for using the proposed model to fit experimental data is
self-consistency. That is, it should be possible to recover the
parameters from behaviour that was actually generated from the model
itself. This can alternatively be seen as a test of the statistical
power of the experiment - i.e., whether $10$ rounds suffice in 
order to infer subject parameters.
Figure~\ref{fig:confuse} shows the confusion matrix which indicates the
probabilities of the inferred guilt (top), ToM (middle) and planning
horizon (bottom) for investor (left) and trustee (right), in each case
marginalizing over all the other factors. We discuss a particular special 
case of the obtained confusion in ~\ref{fig:WorstPlan}. Said confusion relates
 to observations made in empirical studies (see \cite{Brooks2005,Ting2012}) and 
suggests the notion of the planning parameter, as measure of consistency of play. 
 Later, we show comparative
data reported in the study \cite{Debbs2008}, which only utilized a fixed
planning horizon of $2$ and $2$ guilt states (and did not exploit the
other simplifications that we introduced above), see figure ~\ref{fig:DebbsConfu} 
for a depiction of the levels of confusion in that study.
These simplifications implied that the earlier study would find recovery of theory of mind 
in particular to be harder. 

\begin{center}

\includegraphics[width=6in, height = 6in]{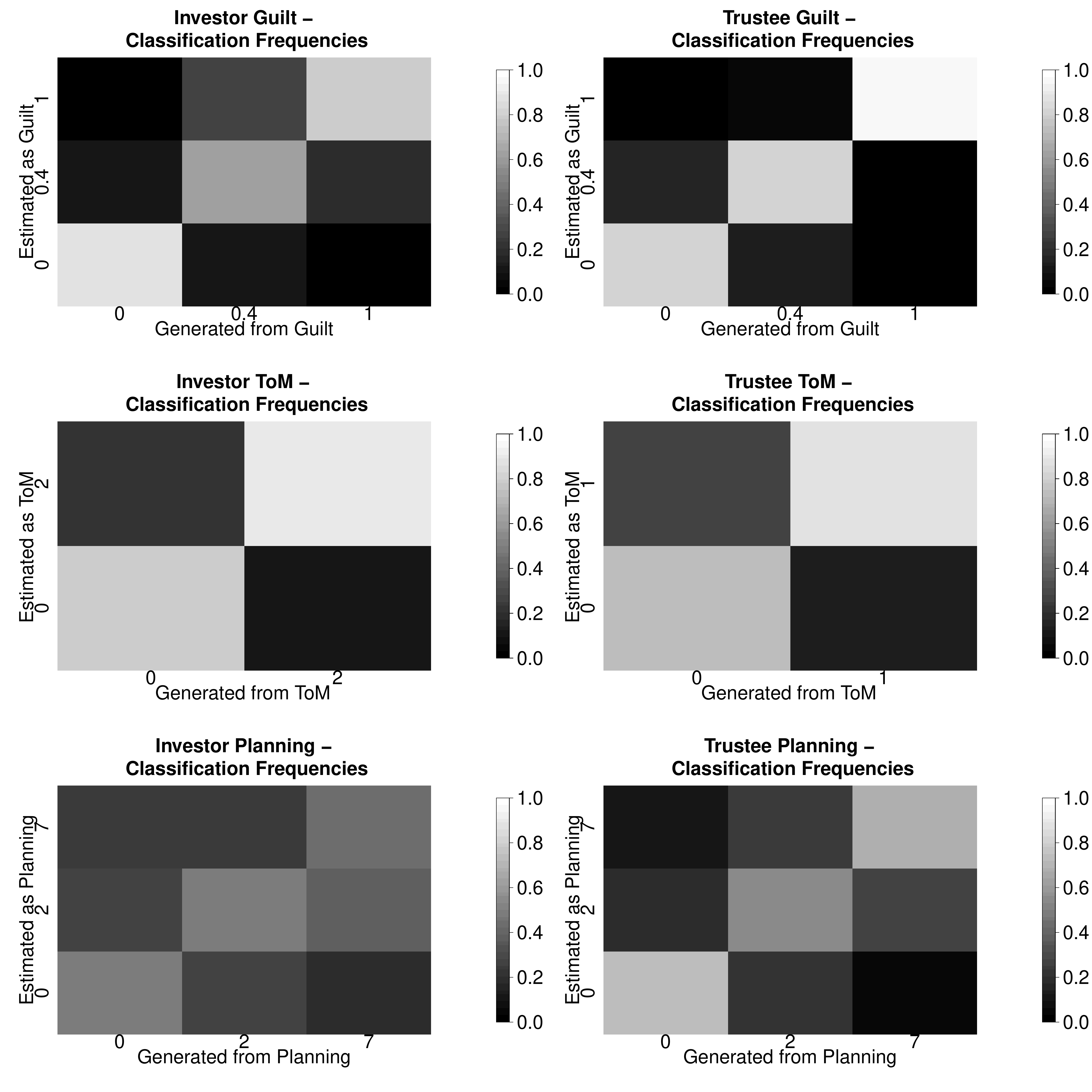}
\captionof{figure}{\small{Percentage of inferred guilt , theory of mind  and planning horizon for investor (left) and trustee (right)
as a function of the true values, marginalizing out all the other
parameters. Each plot corresponds to a uniform mix of $15$ pairs per parameter combination
and partner parameter combination. }
}
\label{fig:confuse}
\end{center}

Guilt is recovered in a highly reliable manner. By contrast, there is a
slight tendency to overestimate ToM in the trustees. The
greatest confusion turns out to be inferring a $P^I=7$ investor as having $P^I=2$
when playing an impulsive trustee ($P^T=2$), a problem shown more
directly in Figure~\ref{fig:WorstPlan}.

 \begin{center}
\vbox{
\includegraphics[width=3in]{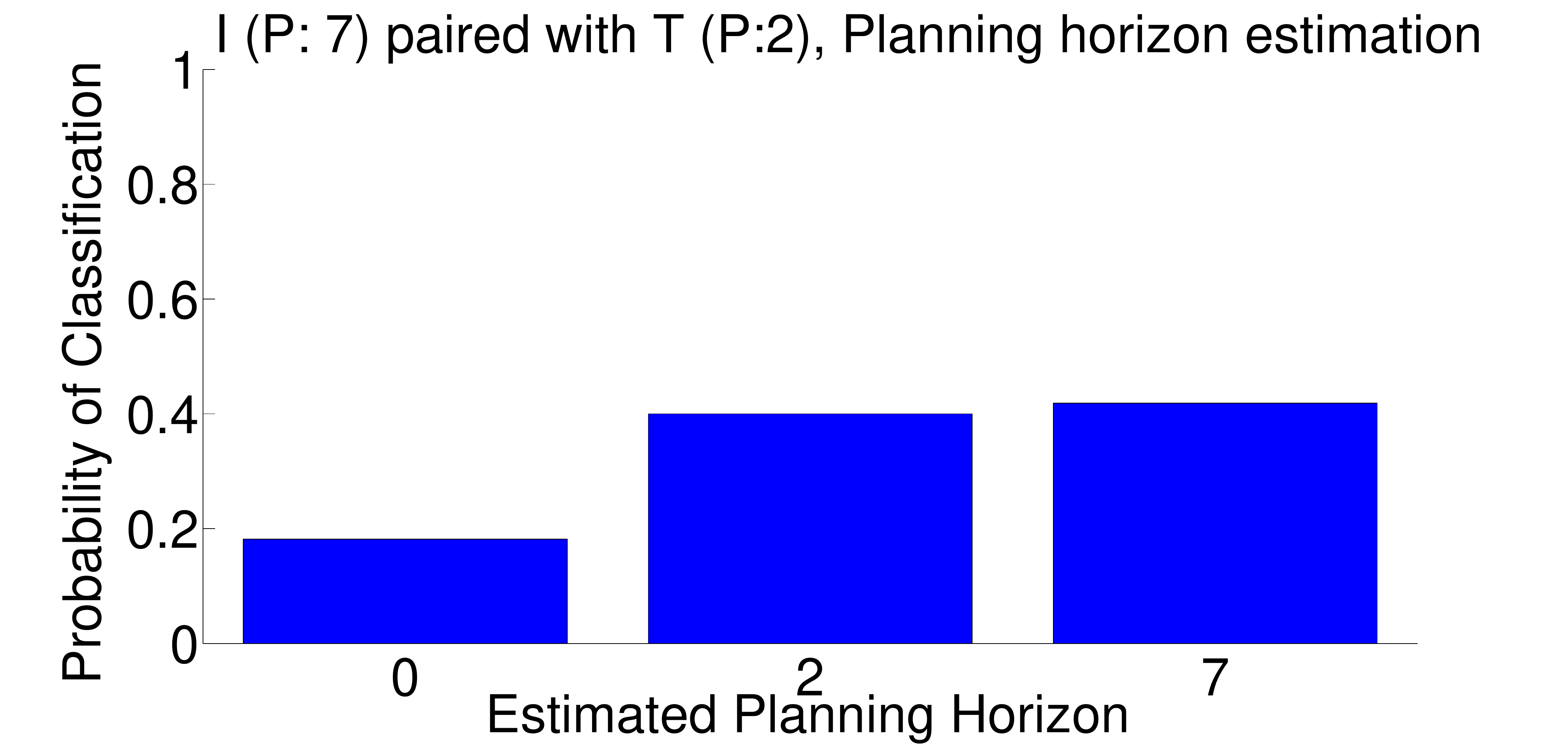}}
\captionof{figure}{\small{Maximum likelihood estimation result, $P^I=7$ and $P^T=2$ agent combinations, marginalized maximum likelihood estimation of 
investor planning horizon over all other parameters.}
}
\label{fig:WorstPlan}
\end{center}
 
The issue is that when the trustee is impulsive, far-sighted investors
($P^I=7$) can gain no advantage over near-sighted ones ($P^I=2$), and so
the choices of this dyad lead to mis-estimation. Alternatively put, an
impulsive trustee brings the investor down to his or her level. This has
been noted in previous empirical studies, notably
\cite{Brooks2005,Ting2012}'s observations of the effect on investors of
playing erratic trustees. The same does not apply on the trustee side,
since the reactive nature of the trustee's tactics makes them far less
sensitive to impulsive investor play.
 
Given the huge computational demands of planning, it seems likely that
investors could react to observing a highly impulsive trustee by
reducing their own actual planning horizons. Thus, the inferential
conclusion shown in figure~\ref{fig:WorstPlan} may in fact not be
erroneous. However, this possibility reminds us of the necessity of
being cautious in making such inferences in a two-player compared to a
one-player setting.

\subsubsection{Confusion Comparison to earlier Work}

We compare our confusion analysis to the one carried out in the grid 
based calculation in 
\cite{Debbs2008}. In \cite{Debbs2008} the authors do not report exact
confusion metrics for the guilt state, only noting that it is possible
to reliably recover whether a subject is characterized by high guilt ($0.7$) or low guilt
($0.3$). We can however compare to the reported ToM level recovery. The
comparison with \cite{Debbs2008} faces an additional difficulty in that
despite using the same formal framework as this present work, the
indistinguishability of the level $1$ and $2$ trustees and the level $0$
and $1$ investors was not identified yet. This explains the somewhat higher
amount of confusion when classifying ToM levels, reported in
$\cite{Debbs2008}$.
Also, since calculation of the Dirichlet-Multinomial
probability was done numerically in this study, some between level
differences will only derive from changes in quadrature points for
higher levels. 
 As can be seen in figure $\ref{fig:DebbsConfu}$ (left), 
almost all of the level $1$ trustees at low guilt are misclassified. This 
 is due to them being classified as level $2$ instead, since both levels 
have the same behavioral features, but apparently the numerical 
calculation of the belief state favored the level $2$ classification over 
the level $1$ classification. The tendency to overestimation is true on the 
investor side as well, with there being a considerable confusion between 
level $0$ and level $1$ investors, who should behaviorally be equivalent. 
In sum, this leads to the reported overestimation of the
theory of mind level. We have depicted the confusion levels reported in
\cite{Debbs2008} in figure $\ref{fig:DebbsConfu}$.

\begin{center}
\includegraphics[width=6in, height = 2in]{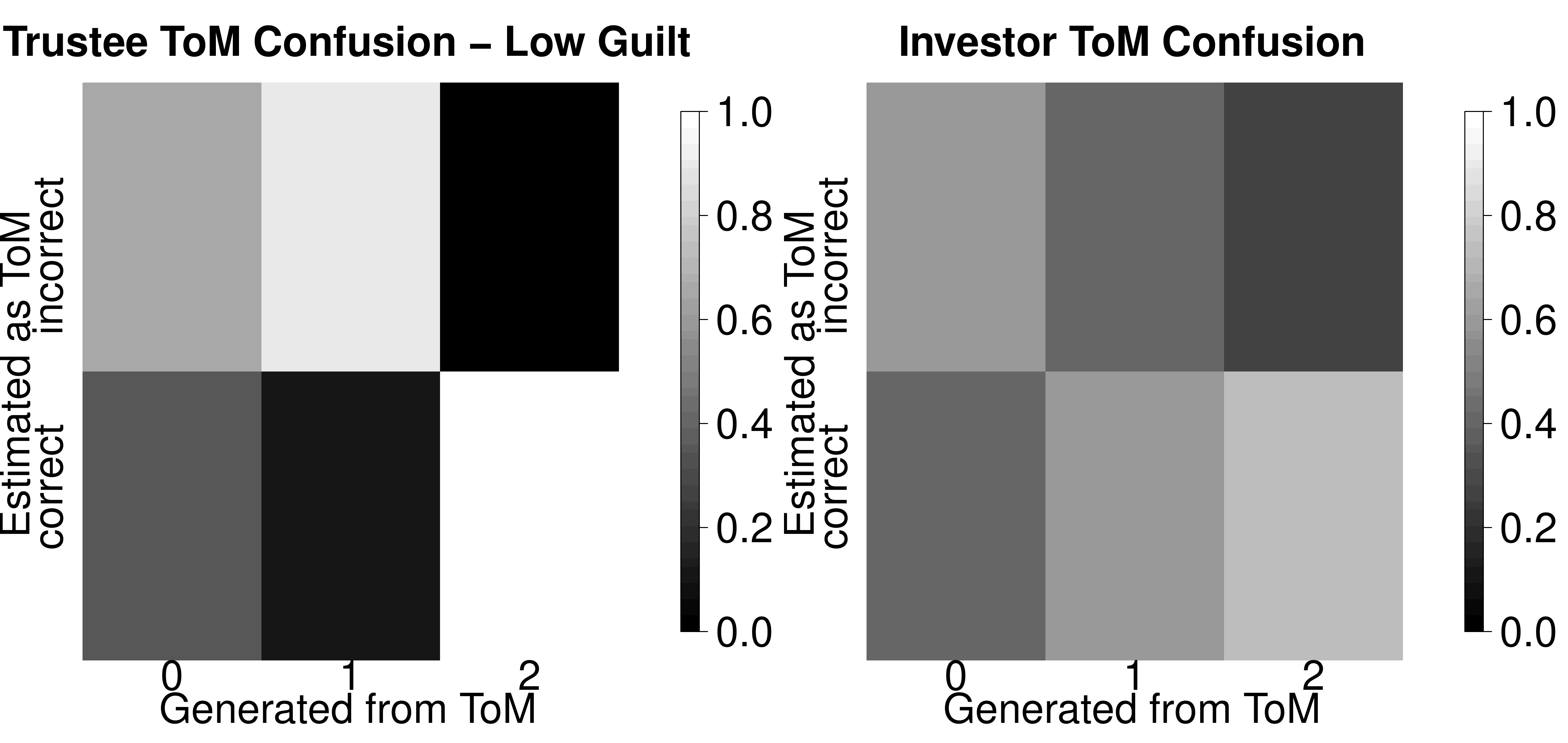} 
	\captionof{figure}{ \small{Classification probability reported in \cite{Debbs2008}. In analogy to figure $\ref{fig:confuse}$ we depict the generated vs estimated values in a matrix scheme.}
	\label{fig:DebbsConfu}}
\end{center}

\end{section}

\subsection{Computational Issues}\label{Computational}

The viability of our method rests on the running time and stability of
the obtained behaviours. In figure \ref{fig:Runtimes}, we show these for
the case of the first action, as a function of the number of simulation
paths used. All these calculations were run at the local Wellcome Trust
Center for Neuroimaging (WTCN) cluster. Local processor cores where of
Intel Xeon E312xx (Sandy Bridge) type clocked at $2.2$ GHz and no
process used more than $4$ GB of RAM. Note that, unless more than
$25k$ paths are used, calculations take less than 
$2$ minutes. 

\begin{center}
\includegraphics[width=6in, height = 2in]{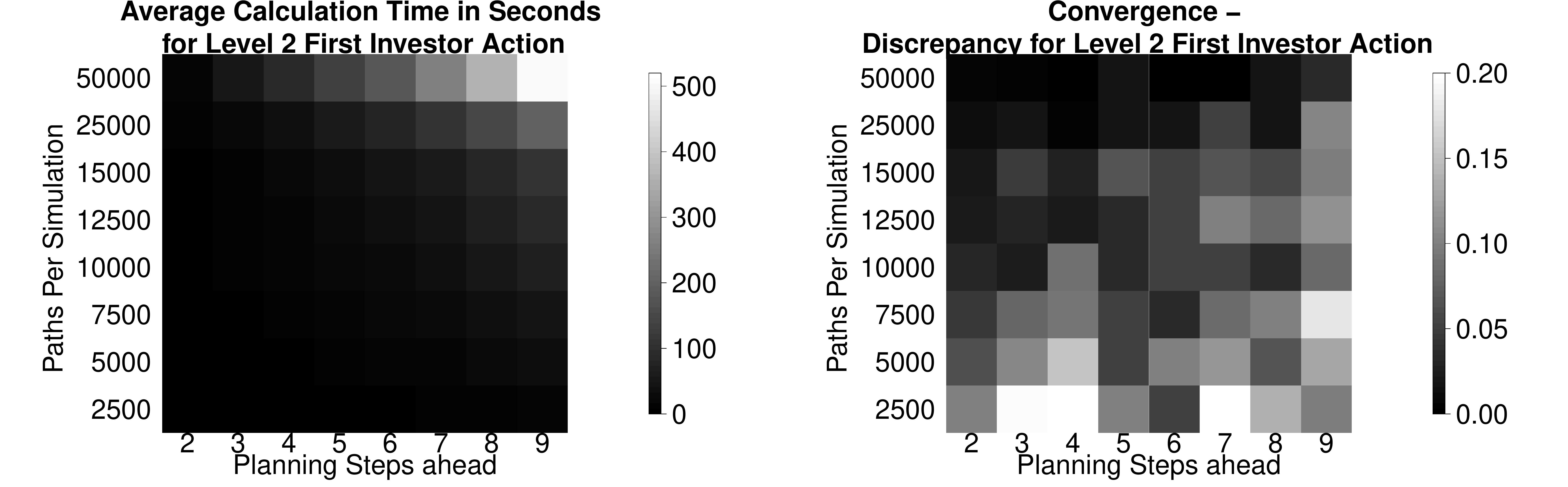}
\captionof{figure}{ \small{(left) Average running times for calculating the first action value of a level $2$, guilt $1$ investor from a given number of simulations, as a function of planning horizon (complexity).
  (right) Discrepancy to the converged
  case of the action probabilities for the first action measured in squared discrepancies.}}
\label{fig:Runtimes}
\end{center}

We quantify simulation stability by comparing simulations for a level
$2$ investor (a reasonable upper bound, because the action value
calculation for this incorporates the level $1$ trustee responses) based
on varying numbers of paths with a simulation involving $10^6$ paths
that has converged. We calculate the between
(simulated) subject discrepancies $C$ of the probabilities for
the first action for $P^I \in \{2 ,3 ,4 ,5 ,6 ,7 ,8 ,9 \}$:
\[
C_{ij} =\frac 1 {119}\sum_{k=1}^{120} (\mathbb P^k[a_0^{I} = \frac i 4 ]
- \hat{\mathbb P} [a^I_0 = \frac i 4] ) (\mathbb P^k[a_0^I = \frac j 4 ]
- \hat{\mathbb P} [a^I_0 = \frac j 4] ) \qquad i,j \in \{ 0, \ldots, 4
\} 
\]
where $\hat{\mathbb P} [a^I_0 = \frac i 4]$ are the converged
probabilities, and $\mathbb P^k[a_0^I]$ is the action likelihood of
simulated subject $k$. If the sum of squares of the entries in the discrepancy matrix is low, then the
probabilities will be close to their converged values.

As can be seen from figure~\ref{fig:Runtimes} (right), for $25$k paths even
planning $9$ steps ahead agents have converged in their initial action
probabilities, such that their action probabilities vary from the
converged value by no more than about $0.1$. However, note that this
convergence is not always monotonic in either the planning horizon or
the number of sample paths. The former is influenced by the differing
complexity of preferences for different horizons -- sometimes, actions
are harder to resolve for short than long horizons. The latter is
influenced by the initial pre-search using constant strategies.

Although $25$k steps suffice for convergence even when planning $9$
steps ahead, this horizon remains computationally challenging. We thus
considered whether it is possible to use a shorter horizon of $7$
steps, without materially changing the preferred
choices. Figure~\ref{fig:SevenNine} illustrates that the difference is
negligible compared with the fluctuations of the Monte Carlo approach,
even for the worst case involving the pairing of $2$ pragmatic types,
with high ToM levels and long planning horizons. At the same time, the calculation for $P=7$ is twice as fast as $P=9$ for the level $2$ investor, which even just for the first action is a difference of $100$ seconds.
\begin{center}
\includegraphics[width=4in, height = 2.in]{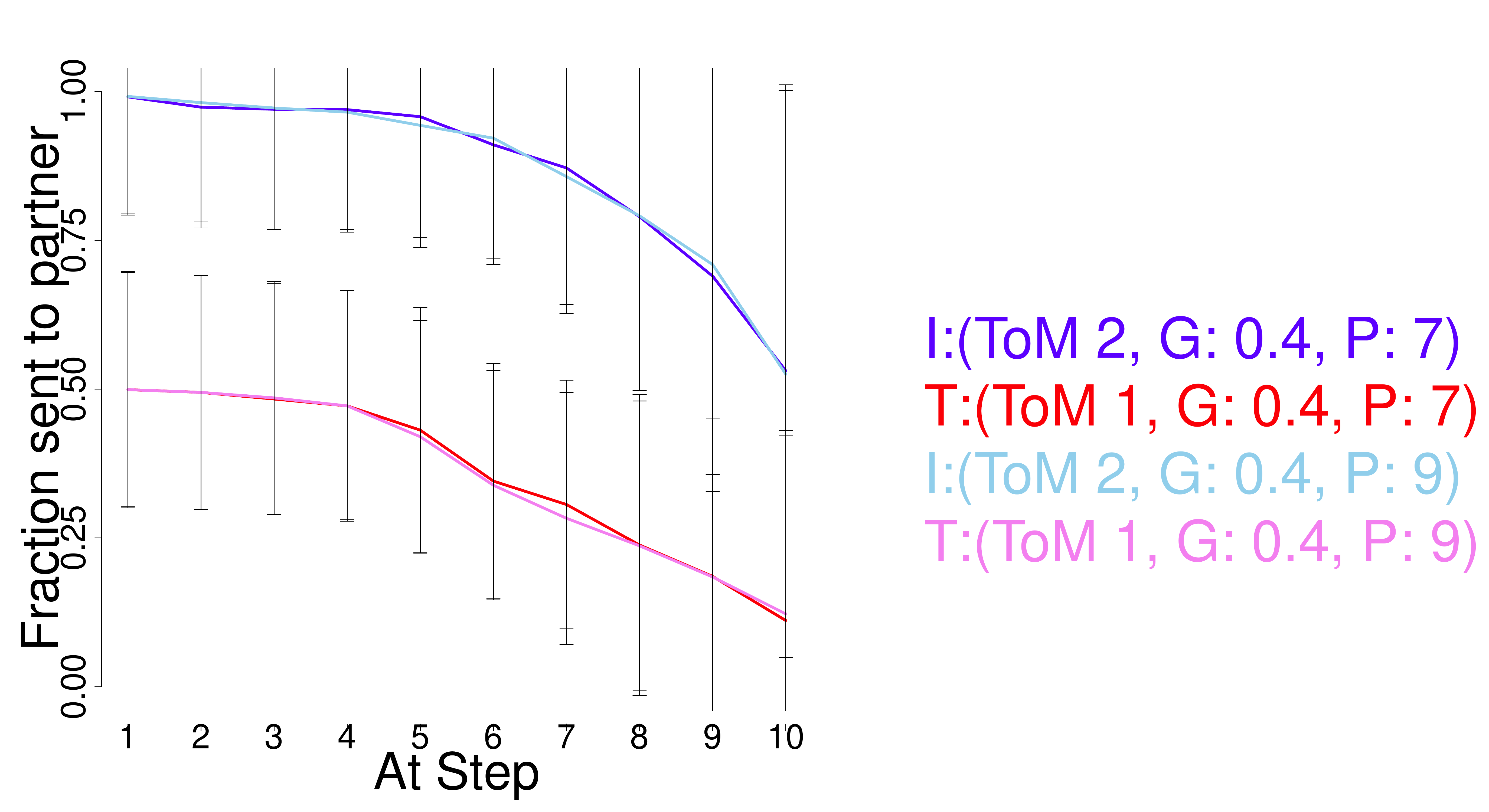}
\captionof{figure}{\small{Average Exchanges, Investor $(2,0.4,7)$ (dark blue) and Trustee $(1,0.4,7)$ (red), as well as Investor $(2, 0.4, 9)$ (light blue) and Trustee $(1, 0.4, 9)$ (rose). The difference between the $2$ planning horizons is not significant at any point.   Error bars are standard deviations.}  }
\label{fig:SevenNine}
\end{center}
Finally we compare our algorithm at planning $2$ steps ahead to the
grid-based calculation used before \cite{Debbs2008,Ting2012}. The
speed advantage is a factor of $200$ for $10^4$ paths in POMCP
demonstrating the considerable improvement that enables us to consider
longer planning horizons.
\subsection{Comparison To Earlier Subject Classifications}\label{EarlierWork}

We will show below, using real subject data taken from \cite{Ting2012}, that our reduction to $3$ guilt states does not render 
likelihoods worse and only serves to improve classification quality. 
We compared the results of our new method with the results obtained in earlier studes (\cite{Debbs2008}, \cite{Ting2012}).
\subsubsection{Dataset}
 We performed inference on the same data sets as in Xiang et
al,\cite{Ting2012} (which were partially analysed in
\cite{Debbs2008},\cite{Brooks2008} and \cite{Misha2010}). This involved
$195$ dyads playing the trust game over $10$ exchanges. The investor
agent was always a healthy subject, the trustees comprised various
clinical groups, including anonymous, healthy trustees (the "impersonal"
group; $48$ subjects), healthy trustees who were briefly encountered
before the experiment (the "personal" group; $52$ subjects), trustees
diagnosed with Borderline Personality Disorder (BPD) (the "BPD" group;
$55$ subjects), and anonymous healthy trustees matched in socio-economic
status (SES) to the (lower than healthy) SES distribution of BPD trustees, (the "low SES'' group; $38$ subjects).

\subsubsection{Models Used}

We compared our models to the results of the model used in \cite{Ting2012} on
the same data set (which incorporates the data set used in \cite{Debbs2008}).
The study \cite{Ting2012} uses $5$ guilt states $\{ 0, 0.25, 0.5, 0.75, 1 \}$ compared to our $3$, 
a planning horizon of $2$ and an inverse temperature of $1$, otherwise the formal framework is exactly the same 
as in section ~\ref{TruPlan}.
Action values in \cite{Ting2012} were calculated by an exact grid search over all 
possible histories and a  
numerical integration for the calculation of the belief state.  
For comparison purposes we built a "clamped" model in which the
planning horizon was fixed at the value $2$, with $3$ guilt states and a inverse temperature set to
$\beta = \frac 1 3$. Additionally, we compared to the outcome for the
full method in this work, including estimation of the planning horizon.  We noted
that in the analysis in \cite{Ting2012}, an additional approximation had been
made at the level $0$ investor level, which set those investors as non
learning.  This kept their beliefs uniform and yielded much
better negative loglikelihoods within said model, than if they were learning.

\subsubsection{Subject Fit}

A minimal requirement to accept subject results as significant is that
the negative log likelihood is significantly better than random on average at $p <0.05$, otherwise we would not trust a
model based analysis over random chance and the estimated parameters
would be unreliable. This criterion is numerically expressed as a negative loglikelihood of $16.1$ for $10$ exchanges, calculated 
from $5$ possible actions at a probability of $0.2$ each, with independent actions each round.

For the analysis in \cite{Ting2012}, we found that the special approximation made in
\cite{Ting2012} allowed for significantly better negative log
likelihoods (mean $11.98$); if this approximation is removed, the
investor data fit at an inverse temperature of $1$ would be worse than
random for this data set. Additionally, the model used in \cite{Ting2012}
did not fit the trustee data significantly better than random at
$p<0.05$ (mean negative loglikelihoods $15.6$ and standard deviation of
$>3$).  

Conversely, for both our clamped and full model analysis at $\beta =
\frac 1 3$, the trustee likelihood is significantly better than random
($11.7$ at the full model) and the investor negative loglikelihood is
slightly better on average (smaller) than found in \cite{Ting2012} with $5$
guilt states ($11.7$ for our method, vs $11.98$).  This confirms that
reducing the number of guilt states to $3$ only reduces confusion and
does not worsen the fit of real subjects data.  Additionally, it becomes
newly possible to perform model-based analyses on the BPD trustee guilt
state distribution, since the old model did not fit trustees
significantly better than random at $p<0.05$.

The seemingly low inverse temperature at $\beta = \frac 1 3$ is a consequence of the 
size of the rewards and the quick accumulation of higher expectation values 
with more planning steps, as the inverse temperature needs to counter balance the 
expectation size to keep choices from becoming nearly deterministic.
Average investor reward expectations (at the first exchange) for planning $0$ steps stand at $18$ with 
an average $18$ being added at each planning step. 

\begin{center}
\includegraphics[width = 6in, height = 6 in]{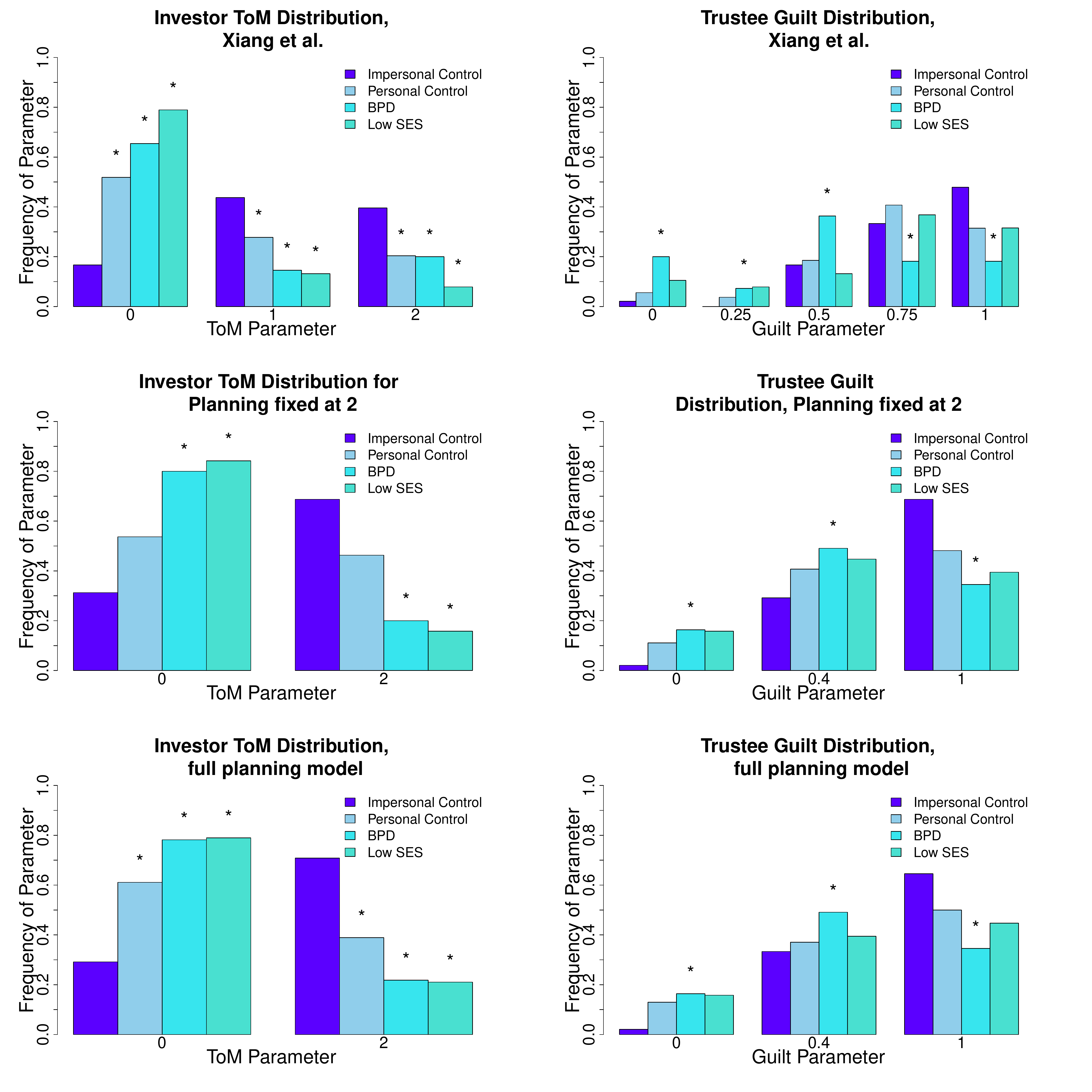}
\captionof{figure}{\small{Parameter Distributions for different models on the data set of ~\cite{Ting2012}. (upper left) Investor ToM distribution is significant ($p<0.05$) between the impersonal control condition 
and all other conditions. (upper right) Trustee Guilt distribution is significant between impersonal controls and the BPD trustees.
(middle left) Planning $2$ investor ToM distribution with $3$ guilt states. BPD and low SES differences to impersonal are significant. (middle right) Planning $2$ trustee guilt, the difference 
between BPD trustees and impersonal controls is significant.
  (bottom left) Full planning model investor ToM, all differences to impersonal are significant. 
(bottom right) Full planning model trustee guilt. BPD trustees are significantly different from controls.
The asterisk denotes a significant ($p<0.05$) difference in the Kolmogorov-Smirnov two sample test, to the impersonal control group.}
\label{fig:Xiang}}
\end{center}

\subsubsection{Marginal Parameter Distributions -Significant Features}

Figure $\ref{fig:Xiang}$ shows the significant parameter distribution differences (Kolmogorov-Smirnov 
two sample test, $p <0.05$). For investor theory of mind and trustee guilt distribution, many of the same differences are significant for the analysis reported in ~\cite{Ting2012} (see Fig. ~\ref{fig:Xiang}, upper panels), 
for an analysis using our model with a "clamped" planning horizon of $2$ steps 
ahead (see Fig. ~\ref{fig:Xiang}, middle panels, to match with the approach of ~\cite{Debbs2008}) and for our full model, using $3$ guilt states, ToM level up to $2$ and $3$ planning horizons (see Fig. ~\ref{fig:Xiang},
 bottom panels and Fig. ~\ref{fig:plan}).  We find significantly lowered ToM in most other groups, 
compared to the impersonal control group. We find a significantly lowered guilt distribution in BPD trustees, however the guilt difference was not used for fMRI analysis in 
~\cite{Ting2012}, because, as noted above, the trustee was not fit significantly better than random at $p <0.05$ in the earlier model. For our full model with $3$ planning 
values, we find additional significant differences on the investor side: While all ToM distributions are 
significantly different from the impersonal condition, the planning difference between the personal and impersonal conditions is not significant at $p<0.05$, while it is significant for the other groups 
(see Fig. ~\ref{fig:plan}). Thus, this is the only model keeping the parameter distribution of the personal group distinct from both the impersonal group 
(from which it is not significantly different in the clamped model) 
and the low SES playing controls and BPD playing controls (from which it is not significantly different based on the parameters in ~\cite{Ting2012}) at the same time.

This supports the planning horizon as a "consistency of play" and additional rationality measure, as the subjects do not think about possible partner deceptions as much in the personal condition, 
having just met the person they will be playing (resulting in lowered ToM). However, their play is non disruptive, if low level, and consistent exchanges result. BPD and low SES trustees however 
disrupt the partners' play, lowering their planning horizon.

\begin{center}
\includegraphics[width=3in, height = 2in]{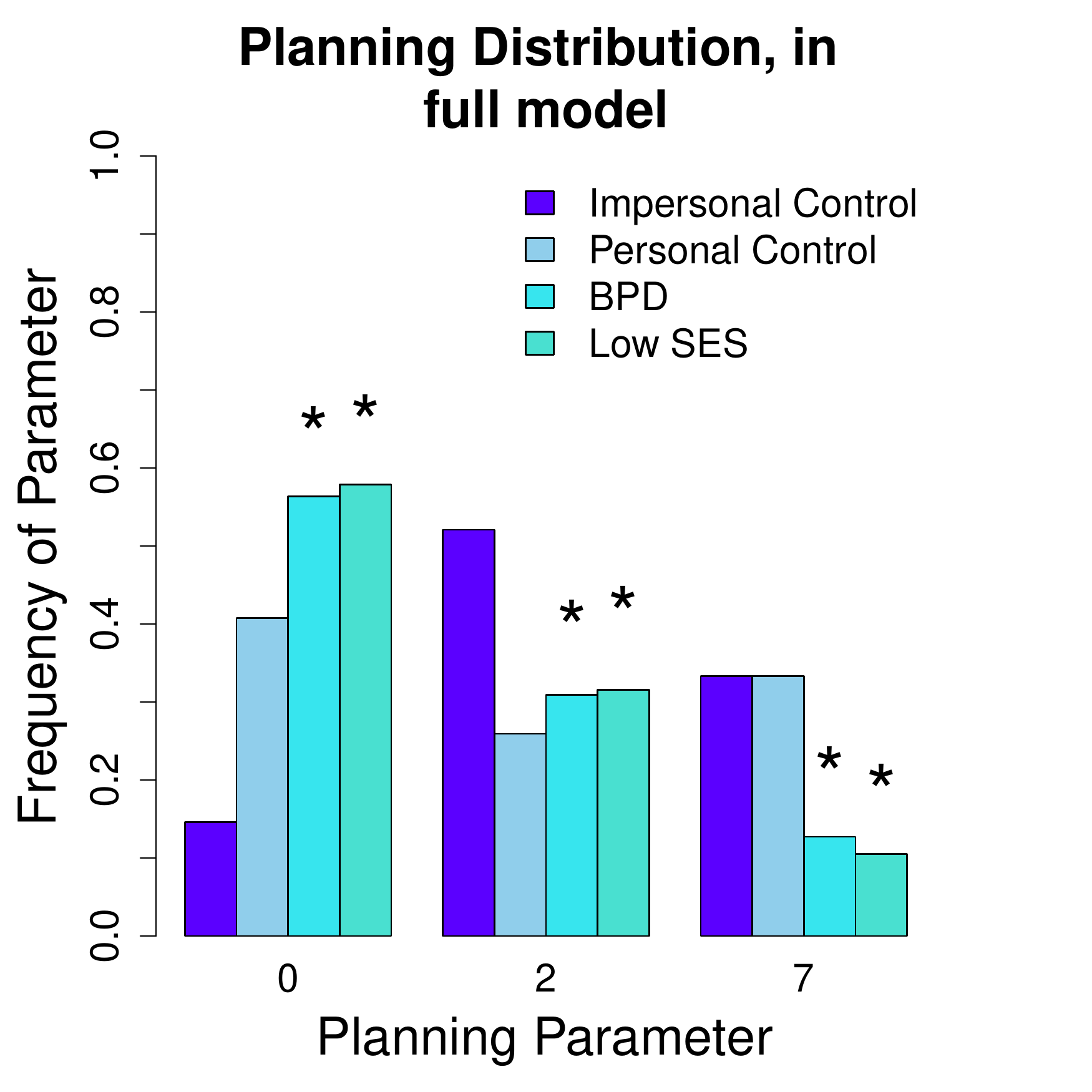}
	\captionof{figure}{\small{Planning distribution for Investors, distinguished between personal 
condition controls (non significant) and BPD and low SES trustees (significantly lower than impersonal).
The asterisk denotes a significant ($p<0.05$) difference in the Kolmogorov-Smirnov two sample test,
 to the impersonal control group.}
\label{fig:plan}}
\end{center}

\begin{section}{Discussion}

We adapted the Monte-Carlo tree search algorithm designed for partially
observable Markov decision processes  \cite{POMCP2010} to the interactive,
game-theoretic, case \cite{IPOMDP}. We provide significant simplifications to the case of dyadic 
social exchange, which benefit any IPOMDP based method.
We illustrated the power of 
this method by extending the
computationally viable planning horizon in a complex, multi-round,
social exchange game to be able to encompass characteristic behaviours that have been seen in
human play \cite{Brooks2008}. 

We also showed that the 10 rounds that had
been used empirically suffice to license high quality inference about
parameter values, at least in the case that the behaviour was generated
from the model itself. We exhibited three fundamental forms of dynamical
behaviour in the task: cooperation, and two different varieties of
coaxing. The algorithm generates values, state-action values and
posterior beliefs, all of which can be used for such methods as
model-based fMRI. 

We find that the results in  \ref{IMPULSIVE}, \ref{MISMATCH} and figures \ref{fig:WorstPlan} 
and \ref{fig:plan} confirm the planning horizon as a consistency of 
play parameter, that encodes the capability of a subject to execute 
a consistent strategy throughout play. As such it may be disrupted 
by the behavior of shorter planning 
partners, as can be seen in ~\ref{fig:WorstPlan} and \ref{fig:plan}.

Furthermore, comparing to earlier data used in the work \cite{Ting2012} 
we can confirm the relevance of the planning parameter in the treatment 
of real subject data, classifying subject groups along the new axis 
of consistency of play.

The newly finer classification of subjects along the three axes of
theory of mind, planning horizon and guilt $(k, P, \alpha)$ should
provide a rich framework to classify deficits in clinical populations
such as an inability to model other people's beliefs or intentions,
ineffective model-based reasoning, and a lack of empathy. Such analyses
can be done at speed, of the order of 10s of subjects per hour. 

One might ask whether the behavioural patterns derived in this work 
might be obtained without invoking the cognitive hierarchy and 
instead using a large enough state space, which encodes the
 preferences and sophistication of the other agent as many separate states, rather than 
a few type parameters plus the cognitive hierarchy. This is in 
principle possible, however we prefer ToM for $2$ reasons: Firstly, 
the previous study \cite{Ting2012} and others have found neural 
support for the distinction between high ToM and low ToM subjects in real play, 
suggesting that this distinction is not but a mathematical convenience (cf. \cite{Ting2012},  
p.$4$ and $5$ for a neural representation 
of prediction errors associated to level $0$ and level $2$ thinking).
Secondly, we can specify features of interest, such as inequality aversion 
and planning at the lowest level, then generate high level behaviours in 
a way that yields an immediate psychological interpretation in terms of the 
mentalization steps encoded in the ToM level. 

The algorithm opens the door to finer analysis of complicated social
exchanges, possibly allowing optimization over initial prior values in
the estimation or the analysis of higher levels of theory of mind, at
least on tasks with lower fan-out in the search tree. It would also be
possible to search over the inverse temperature $\beta$.

One important lacuna is that although it is straightforward to use
maximum likelihood to search over fixed parameters (such as ToM level,
planning horizon or indeed temperature), it is radically harder to
perform the computations that become necessary when these factors are
incorporated into the structure of the intentional models. That is, our
subjects were assumed to make inferences about their opponent's guilt,
but not about their theory of mind level or planning horizon. 

 It is possible that additional tricks would make this viable for the trust
task, but it seems more promising to devise or exploit a simpler game in
which this would be more straightforward.

\end{section}

\begin{section}{Acknowledgements}
  The authors would like to thank James Lu, Johannes Heinrich, Terry Lohrenz and Arthur
  Guez for helpful discussions and Xiaosi Gu, Michael Moutoussis, Tobias Nolte and Iris Vilares for comments on the manuscript. Special thanks go to Andreas Morhammer,
  who brilliantly advised on several issues with C++, as well as the IT
  support staff at the Wellcome Trust Center for Neuroimaging and
  Virgina Tech Carilion Research Institute. The authors gratefully acknowledge funding by 
the Wellcome Trust (Read Montague) under a Principal Research Fellowship, the Kane Foundation (Read Montague) and 
the Gatsby Charitable Foundation (Peter Dayan). Andreas Hula is supported by the Principal Research Fellowship of Professor 
Read Montague.
\end{section}


\bibliography{NeurBibAlt}
\bibliographystyle{unsrt}{}

\newpage
\appendix

\end{document}